\newcommand{\subparagraph}{}
\newtheorem{theorem}{Theorem}
\newtheorem{lemma}{Lemma}
\newtheorem{proposition}{Proposition}
\newtheorem{assumption}{Assumption}
\newtheorem{remark}{Remark}
\DeclarePairedDelimiter\ceil{\Bigg\lceil}{\Bigg\rceil}
\DeclarePairedDelimiter\normalceil{\Big\lceil}{\Big\rceil}
\DeclarePairedDelimiter\floor{\lfloor}{\rfloor}
\DeclareMathOperator*{\argmaxA}{arg\,max}
\DeclareMathOperator*{\dprime}{\prime \prime}
\newcolumntype{R}{>{$}r<{$}}
\newcolumntype{L}{>{$}l<{$}}
\newcolumntype{M}{R@{${}={}$}L}
\newcommand*{\rom}[1]{\expandafter\@slowromancap\romannumeral #1@}
\newsavebox{\largestimage}
\begin{document}
\title{Multi-Armed Bandit for Energy-Efficient and Delay-Sensitive Edge Computing in Dynamic Networks with Uncertainty}
\author{
\IEEEauthorblockN{Saeed Ghoorchian and Setareh Maghsudi\\}
\thanks{The authors are with the Electrical Engineering and Computer Science Department, Technical University of Berlin, 10623 Berlin, Germany. (e-mail: saeed.ghoorchian@tu-berlin.de, maghsudi@tu-berlin.de) 
}
}
\maketitle
\begin{abstract}
In the edge computing paradigm, mobile devices offload the computational tasks to an edge server by routing the required data over the wireless network. The full potential of edge computing becomes realized only if a smart device selects the most appropriate server in terms of the latency and energy consumption, among many available ones. The server selection problem is challenging due to the randomness of the environment and lack of prior information about the environment. Therefore, a smart device, which sequentially chooses a server under uncertainty, aims to improve its decision based on the historical time and energy consumption. The problem becomes more complicated in a dynamic environment, where key variables might undergo abrupt changes. To deal with the aforementioned problem, we first analyze the required time and energy to data transmission and processing. We then use the analysis to cast the problem as a budget-limited multi-armed bandit problem, where each arm is associated with a reward and cost, with time-variant statistical characteristics. We propose a policy to solve the formulated problem and prove a regret bound. The numerical results demonstrate the superiority of the proposed method compared to a number of existing solutions.
\end{abstract}
{\em Keywords}: Computation offloading, edge computing, multi-armed bandits, uncertainty.  
\section{Introduction}
\label{sec:Intro}
The popularity of mobile applications has significantly increased among users over the past years. Some apps, for example, those based on face and/or voice recognition, produce an excessive amount of data and require heavy computations. Even if a hand-held device is capable of performing the computations using its own internal hardware, local data processing usually yield long delay as well as excessive power consumption, thereby resulting in a low Quality of Service (QoS). Moreover, in a long run, repetitive local computation might affect the lifetime of the battery or other components of a mobile device.

In the next-generation wireless networks, edge servers (for example, small base stations) are foreseen to offer computational services, meaning that the devices have the possibility to offload their computational data through a wireless network to the edge servers so that the data is processed remotely. Compared to the cloud servers \cite{Josilo17:AGT}, edge servers are located at close proximity to the users, which guarantees a shorter data transmission time and thereby a lower energy consumption \cite{Abbas18:MEC}, \cite{Arif16:ASO}. Needless to say, edge computing becomes more efficient if the devices are autonomous, i.e., able to choose when and to which server to offload and which resources to use. Implementing an autonomous behavior is, however, not a trivial task. One reason is that unlike cloud servers, there might be multiple edge servers available to the device at the time of offloading. Moreover, often the devices are not given any prior information about the servers and network. In addition, the environment might be dynamic, i.e., some statistical characteristics of the network and servers might change over time.

To deal with the aforementioned challenge, an autonomous device interacts with the network, by sequentially choosing a server under uncertainty, and gathers some information about the environment in each offloading round. The goal is to improve the decisions for the next offloading rounds based on the previously consumed time and energy. This problem is an instance of online decision-making, where the decisions are taken sequentially based on the historical observations to optimize some objective function. 

Multi-Armed Bandit (MAB) problem is a subclass of online decision-making problems which involves a gambling machine with several arms and a gambler \cite{Robbins52:SAS}, \cite{Maghsudi17:MAB}. In this paper, we use an MAB formulation to deal with the optimal server selection problem. We investigate an MAB problem, where pulling each arm reveals two random variables reward and cost. The reward and cost generating processes are a priori unknown and piece-wise stationary. At each of the consecutive rounds, the gambler pulls one arm, receives a reward, and pays a cost. Given a finite budget, the gambler tries to maximize its accumulated reward before the total paid cost runs out of the budget.
\subsection{Related Works}
\label{subsec:relatedwork}
Similar to any other networking paradigm, resource management is a key challenge in computation offloading due to the scarcity of resources such as the computational power, environmental and hardware constraints such as the number of available servers, and the dynamic status of the environment such as the task arrival rate. In \cite{Yu17:COF}, the authors take advantage of supervised learning methods to solve an offloading problem in a dynamic environment, where a single user decides which components of the application to execute locally and which ones to offload. They jointly optimize the local execution cost and the offloading cost using a deep neural network. The authors in \cite{Dinh17:OIM} study the CPU task allocation problem by formulating optimization problems based on the execution time and energy consumption. They solve the proposed optimization problems using different approximation approaches. In \cite{Liu18:OSI}, the authors formulate a non-convex optimization problem to optimize both the latency and reliability (offloading failure probability) in computation offloading of a single user. They design three algorithms to optimize edge node candidate selection, offloading ordering, and task allocation. In \cite{Huang12:ADO}, the authors investigate the partial offloading of a single device and propose an algorithm which uses a Lyapunov optimization with a given time delay constraint to reduce the energy consumption. Similarly, \cite{Munoz15:OOR} studies the partial offloading of a single user where multiple antennas are available at the mobile terminal and the femto-access point. The authors propose a numerical optimization technique to optimize latency and energy consumption. Further, in \cite{Wang16:MEC}, the authors consider the partial offloading problem and assume the availability of a small cell cloud manager, which determines whether to offload or not and which portion is needed to be offloaded. They propose different algorithms to separately optimize the latency and consumed energy. \cite{Roy17:AAC} considers the offloading problem of a single user in a multi-cloudlet environment and proposes an application-specific cloudlet selection strategy to optimize the execution latency and energy consumption. In \cite{Van18:ADR}, the authors formulate the partial offloading problem of a single user as a Markov decision process. They use a deep reinforcement learning algorithm to find the optimal number of tasks which should be locally executed or offloaded so that the user's utility is maximized whereas the energy consumption, processing delay, required payment, and task loss probability are minimized. 

As mentioned previously, we model the computation offloading problem in the MAB framework. Our approach is perhaps most closely related to \cite{Ding13:MAB}, where a budgeted MAB problem is considered with a reward and a discrete cost which are independent and identically distributed (i.i.d.) random variables. In \cite{Xia15:TSB}, the authors take a probabilistic approach to solve the budgeted MAB problem with i.i.d. reward and cost variables. Similarly, in \cite{Tran12:KBO}, the authors consider the budgeted MAB problem with i.i.d. reward and cost variables. The proposed algorithm assigns a pulling probability to each arm based on the solution of an optimization problem. Nevertheless, extending the developed decision-making policies to dynamic (non-i.i.d.) environments is not straightforward. Our approach is also related to \cite{Garivier08:SWUCB}, where the authors investigate a non-stationary MAB problem. However, in their formulation, pulling arms does not result in any cost. In \cite{Maghsudi18:PEP}, the authors study a budgeted MAB problem, where the reward generating processes of arms are piece-wise stationary and the cost of pulling each arm is fixed but may be different for different arms. Further, in \cite{Xia15:BBP}, the authors study a stationary MAB problem with a reward variable and a continuous cost variable.
\subsection{Our Contribution}
\label{subsec:contribution}
In summary, the novelty in this paper is as follows.
\begin{itemize}
   \item We analyze the required time for data transmission from a user's device to a server, as well as the required time for data processing at a server while taking the dynamic nature and the inhomogeneity of wireless networks into account.
   \item We define the reward and cost in terms of the required time and energy in each offloading round, respectively, and we derive the corresponding probability distributions.
   \item We use an MAB model to solve the distributed server selection problem. Thus, our work extends state-of-the-art works, which are mostly centralized. Moreover, our proposed solution does not require heavy information and does not cause excessive computational complexity.
   \item We propose BPRPC-SWUCB, a novel MAB algorithm, to minimize the expected regret. BPRPC-SWUCB can be used to solve a variety of dynamic decision-making problems where taking actions yields non-i.i.d. reward and cost variables.
   \item We analyze BPRPC-SWUCB by proving a regret bound and compare its performance with several existing MAB algorithms through simulation.
\end{itemize}
\subsection{Organization}
Section \ref{sec:SysMod} describes the system model. In Section \ref{sec:ProFor}, we introduce the concept of reward and cost in the context of the computation offloading problem, and we derive their statistical characteristics. In Section \ref{sec:MAB}, we describe and theoretically analyze an MAB algorithm, named BPRPC-SWUCB. In Section \ref{sec:NumAna}, we present the results of numerical analysis. Section \ref{sec:Con} concludes the paper.
\section{System Model}
\label{sec:SysMod}
We consider a multi-hop wireless network consisting of a set of servers that have fixed locations at the network's edge and a set of users that might be willing to offload their computational job to one of the edge servers. We gather the servers in the set $\mathcal{S}=\{1, \dots, S\}$ so that any device may select one of the $|\mathcal{S}| = S$ to offload its computational task. Throughout the paper, we may use \textit{device} and \textit{offloading user} interchangeably. Moreover, we use the terms, user's device and source, as well as the terms, server and sink, interchangeably.

A general computation offloading procedure consists of four elements: (i) selection of a server, (ii) sending the data to the server, (iii) processing the data and accomplishing the task at the server, and (iv) sending the results to the device. We consider the time to be slotted and denote one time instant by $t$. Moreover, we use the term \textit{round} to refer to the time period required to accomplish an offloading process entirely, i.e., to succeed in all of the aforementioned sections. We denote the rounds by $\theta=1,2,...$. Note that each round $\theta$ includes some time instants $t$.

Each computational job consists of some analysis of the offloaded data. We assume that each computational job can be divided to some homogeneous tasks with respect to the time required to process each task. Without loss of generality, we assume that each device offloads the same amount of the data at each round $\theta$. If a large amount of data is to be offloaded, we model it as multiple rounds of offloading, each with the same amount of data.

As mentioned above, in order to offload a computational task, any user transfers the required data to a server. The transfer takes place via some intermediate helper nodes, which act as transmitters and receivers. This could be, for example, other devices in the network or fixedly deployed micro- or femto small base stations. At each time, every node can act either as a transmitter or as a receiver. We select the transmission range of each node (including the source and any sink $s \in \mathcal{S}$) to be the same and denote it by $R$. That is to say, a node can only transmit to the nodes inside the circle of radius $R$ around that node. In the following, we discuss the network's model from the perspective of one exemplary user.

As it is conventional \cite{Ganti14:DCA}, \cite{Baccelli09:SGA}, we assume that the intermediate nodes (devices, relays, small base stations, and the like), located between the source and a sink, are distributed according to a homogeneous Poisson Point Process (PPP). Since the servers are located at different geographical areas, the density of the aforementioned PPP varies over servers. Therefore, we use $\Lambda_{s}$ to show the network's intensity between the user and each server $s \in \mathcal{S}$. Similar to \cite{Ganti14:DCA} and \cite{Haenggi09:SGA}, to take the transmission impairments of the link between every two nodes into account, we model the links by a Bernoulli random variable with success probability $p_{s,\theta}$. In other words, the transmission is successful (non-outage) with probability $p_{s,\theta}$ and fails (outage) with probability $1-p_{s,\theta}$. Note that the outage probability depends on the server, since the outage probability is affected by factors such as shadowing, fading, and other similar variables which depend strongly on the geographical area as well as the network density. Moreover, the dependency of $p_{s,\theta}$ on the round (time period) $\theta$ accommodates the time-variation of the channel quality. In brief, the network between each server $s$ and the device is modeled by a graph, where the vertices are distributed according to a PPP with intensity $\Lambda_{s}$ and there is an edge between every two vertexes with the probability $p_{s,\theta}$.

As mentioned before, in our problem, we analyze the smart decision-making of a single offloading user, when given a number of choices with respect to the server; nonetheless, it is natural that in every network, there are many of such users, each offloading some tasks to some server. To model the collective behavior of the network mathematically, we assume that the arrived jobs at a server $s$ follow a Poisson distribution with the rate $\lambda_{s,\theta}$. The arrival rate depends on the server $s$ and the offloading round $\theta$, implying that on average, the intensity of the job arrival changes with respect to the servers and time.

In the following assumption, we describe the mathematical model of time-variant characteristics of the random variables.
\begin{assumption} 
\label{Asp1:Nonstationarity} 
For any server $s \in \mathcal{S}$, the parameters $p_{s,\theta}$ and $\lambda_{s,\theta}$ are piece-wise constant with respect to the round $\theta$; in other words, they remain constant unless they experience a change at some specific round(s), referred to as \textit{change point(s)}. Naturally, the change points are not necessarily identical for two aforementioned parameters.

Consider a random process whose instantaneous outcomes are drawn from some probability distribution with parameter $p_{s,\theta}$ and/or $\lambda_{s,\theta}$. Then, by the discussion above, the process is piece-wise stationery, as the distribution of the outcomes remains time-invariant over disjoint time intervals, but changes from one interval to the other.
\end{assumption}
\begin{figure}[t]
\begin{center}
\includegraphics[width=0.7\textwidth]{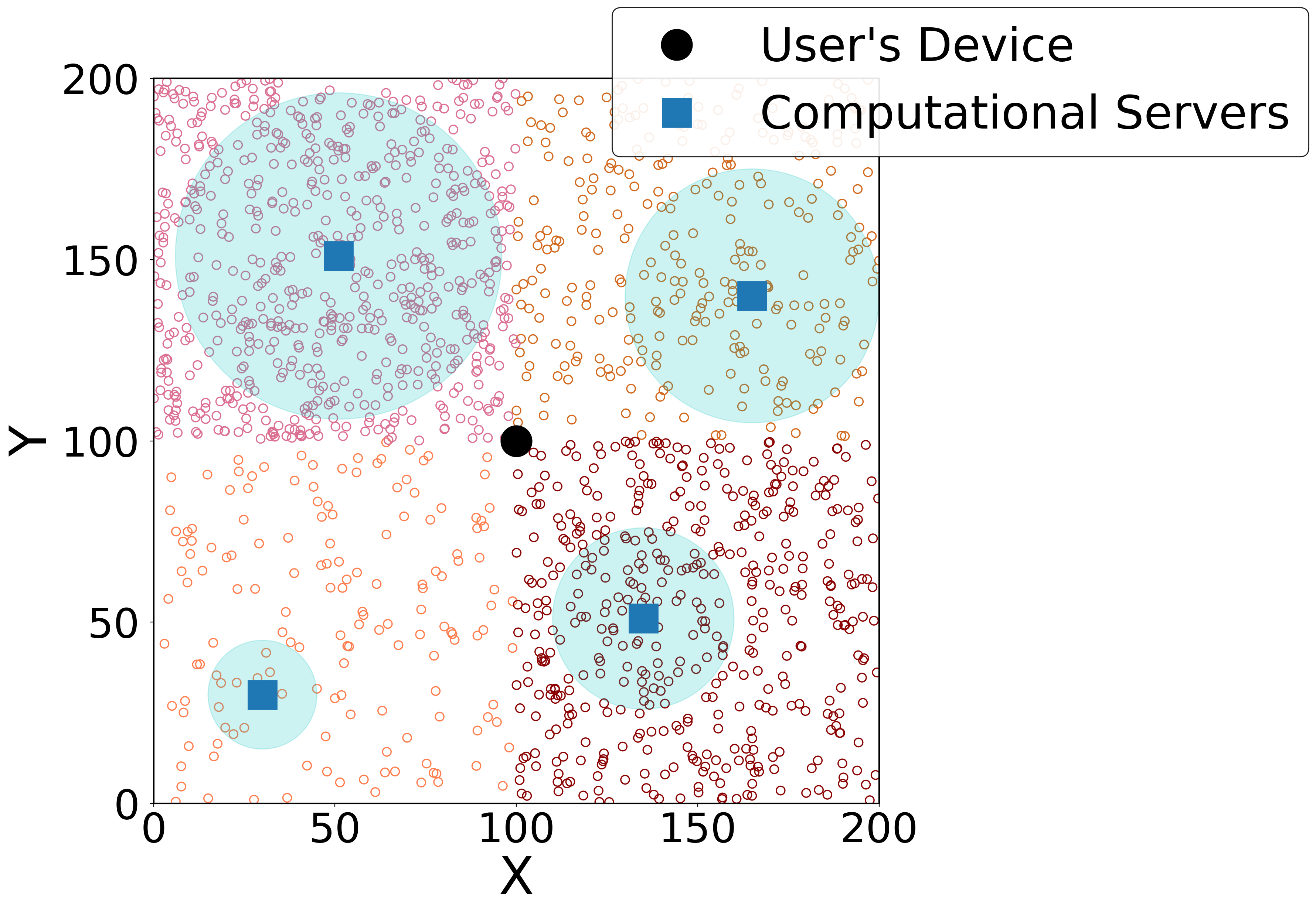}
\end{center}
\caption{An exemplary illustration of a communication network consisting of an offloading user, four computational servers, and the intermediate transmitters and receivers. The intensity of the intermediate nodes varies with respect to each server. The cyan circle around each server represents its job arrival rate, where a bigger radius corresponds to a greater rate.}
\label{NetworkFig}
\end{figure}
In \textbf{Fig. \ref{NetworkFig}}, we illustrate an exemplary system model consisting of an offloading user and four edge servers at some specific time $t$. Geographically, the network is divided into four disjoint areas and the nodes in each area are distributed according to a homogeneous PPP. Naturally, in the areas with higher intensity, a larger number of nodes are available. The transparent cyan circle around each server represents the corresponding job arrival rate.

\textbf{Table \ref{Table:Notations}} summarizes most important system's parameters together with a brief description.
\renewcommand{\arraystretch}{1.1}
\begin{table}[h]
    \begin{center}
    \caption{Summary of most frequently used system parameters}
    \label{Table:Notations}
    \begin{tabular}{c|l}
    \cline{1-1}
    Parameter &   \\ 
    \hline
    $p_{s,\theta}$ & Outage parameter of the network between the user and server $s$ at round $\theta$ \\
    \hline
    $\lambda_{s,\theta}$ & Job arrival rate to the server $s$ at round $\theta$ \\
    \hline
    $\Lambda_{s}$ & Network's intensity between the user and server $s$ \\
    \hline
    $R$ & Transmission range \\
    \hline
    $\rho_{s}$ & Service rate corresponding to the server $s$ \\
    \hline
    $\ell_{s}$ & Distance between the user and the server $s$  \\
    \hline
    \end{tabular}
    \end{center}
\end{table}
%
\section{Statistical Characteristics of the System Variables}
\label{sec:ProFor}
Conventionally, in wireless networks, each user has some strict constraints (or requirements) on the delay and the energy. Therefore, given multiple choices, it is natural that a device aims at selecting a server that guarantees minimum delay as well as minimum energy consumption. Choosing the best server is however not a trivial task, in particular under uncertainty, i.e., when the required information is not available at the user. The problem becomes more challenging in a dynamic environment, where the characteristics of the network and servers vary over time.

In order to mathematically formulate the server selection problem, in the following, we first define and analyze the reward and cost of selecting each server. 
\subsection{Reward}
\label{subsec:Reward}
As mentioned earlier, in computation offloading, an important performance metric is the total time required for an offloading round, referred to as the \textit{delay time} and denoted by $d_{s,\theta}$. The delay time at round $\theta$ consists of the processing time $f_{s,\theta}$ at the server $s$ and the transmission time $g_{s,\theta}$ between the source and the sink node $s$. Therefore, at round $\theta$ we have
\begin{equation}
d_{s,\theta} = f_{s,\theta} + g_{s,\theta}.
\end{equation}

For the user's quality of service (QoS) satisfaction, we require that the delay time $d_{s,\theta}$ remains below a pre-specified threshold, namely, $\delta$. In other words, the QoS is satisfied if $d_{s,\theta} \leq \delta$, and is not satisfied otherwise. Therefore, we define the \textit{reward}, gained by the offloading user at round $\theta$ upon choosing the server $s \in \mathcal{S}$, as
\begin{align} 
\label{RewardFormula}
r_{s,\theta}=\begin{cases}
1, & \hspace{3mm} d_{s,\theta} \leq \delta \\
0, & \hspace{3mm} d_{s,\theta}>\delta.
\end{cases}
\end{align}

In the rest of this section, our goal is to find the distribution of the reward $r_{s,\theta}$, which is determined based on the distribution of the delay time $d_{s,\theta}$. Consequently, in the following, we determine the distribution of the processing time $f_{s,\theta}$ and the transmission time $g_{s,\theta}$.
\subsubsection{Processing Time}
\label{subsec:Process}
For each server, we define the \textit{service rate} as the number of tasks which can be processed by that server per unit of time. Naturally, the servers are inhomogeneous in terms of service rate, meaning that each server $s \in \mathcal{S}$ has some service rate $\rho_{s}>\lambda_{s,\theta}$, $\forall \theta$. We use $z_{s,\theta}$ to denote the \textit{service time} required by the server $s \in \mathcal{S}$ at round $\theta$.

Moreover, to be processed, each computational job arrived at a server $s \in \mathcal{S}$ has to wait in a queue for some time depending on the job arrival rate. Consider a time instant $t$ inside a round (time period) $\theta$. We denote the \textit{waiting time} at time instant $t$ by $w_{s,t}$. Similarly, we use $f_{s,t}$ and $z_{s,t}$ to denote the processing time and the service time at time instant $t$, respectively. Thus, at server $s \in \mathcal{S}$, the processing time at time $t$ is given by
\begin{align}
f_{s,t}=z_{s,t}+w_{s,t}.
\end{align}

We consider an $M/M/1$ queue model, by which $z_{s,t}$ and $f_{s,t}$ follow an exponential distribution with parameter $\rho_{s}$ and $\rho_{s}-\lambda_{s,t}$, respectively  \cite{Sztrik12:BQT}, \cite{Ranadheera18:COA}. By Assumption \ref{Asp1:Nonstationarity}, the job arrival rate remains fixed at least during a specific round $\theta$. Therefore, for any time instant $t$ inside a round $\theta$, it holds $\lambda_{s,\theta}=\lambda_{s,t}$. In words, this implies that the expected value of the waiting time, and consequently of the processing time, remains constant for the entire time period of an offloading round $\theta$. Therefore, throughout the paper, we use $f_{s,\theta}$ to denote the \textit{processing time} at the server $s$ for round $\theta$, regardless of the specific time instant $t$ inside the round $\theta$. Moreover, note that by Assumption \ref{Asp1:Nonstationarity}, $\lambda_{s,\theta}$ is assumed to be piece-wise constant, which implies that $f_{s,\theta}$ follows an exponential distribution with piece-wise constant mean $\frac{1}{\rho_{s}-\lambda_{s,\theta}}$. Formally,
%
\begin{equation} \label{ProbProcessTime}
    \mathbb{P}(f_{s,\theta} = x) 
    = \begin{cases} 
    (\rho_{s} - \lambda_{s,\theta}) e^{-(\rho_{s} - \lambda_{s,\theta})x}, \hspace{10mm} x \geq 0\\
    0, \hspace{45.5mm} x < 0
    \end{cases}
\end{equation}
%
%
\subsubsection{Transmission Time}
\label{subsec:Transmission}
A path of length $N$ is an $N$-hop connection between the source $o$ and the sink $s$. We represent such path by a sequence $o = x_{1}, x_{2}, \dots, x_{N+1} = s$, where $x_{i}$ denotes the $i$-th node in the path and $x_{1}$ and $x_{N+1}$ stand for the source and the sink, respectively. Similar to \cite{Takagi84:OTR} and \cite{Contla03:EHC}, we define the concept of \textit{progress}. Assume a transmitter node located at $x_{i}$. The progress of a node $x_{i+1}$ is defined as the projection of the link between $x_{i}$ and $x_{i+1}$ onto the straight line connecting the node $x_{i}$ and the sink $s$. Additionally, we say a progress is positive if the projection happens towards the sink $s$ and it is negative otherwise. We define the maximum number of hops $h_{s,\max}$ between the source $o$ and the sink $s$ as the maximum $N$ for which a path exists between $o$ and $s$ and all the nodes $x_{i}$, $i=2, \dots, N+1$ have positive progress. We assume that $h_{s,\max}$ between the source $o$ and any sink $s$ is known.

\begin{figure}[t]
\begin{center}
\includegraphics[width=0.7\textwidth]{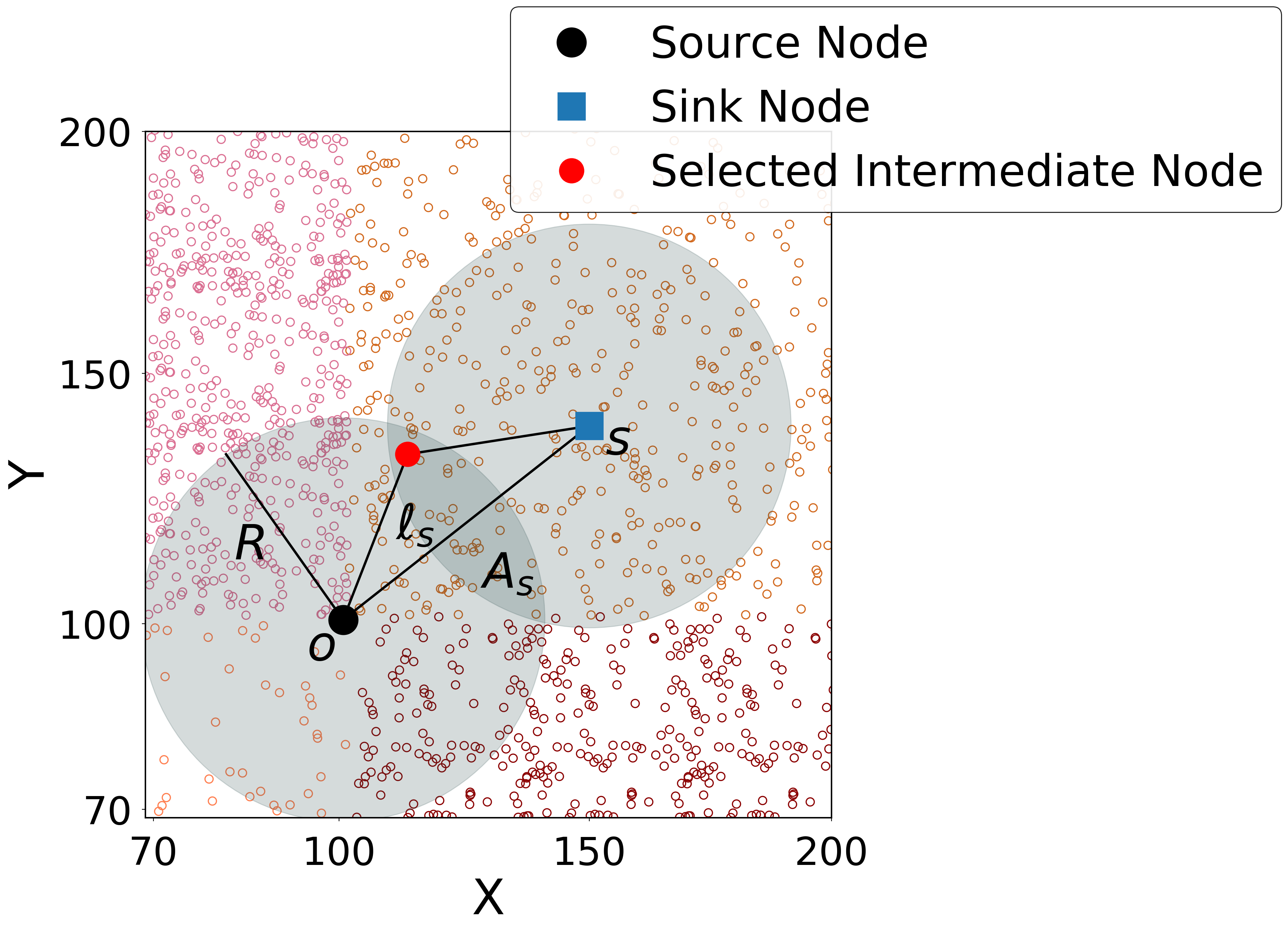}
\end{center}
\caption{Sketch of a 2-hop communication path between the user (source) and a computation server (sink).}
\label{ProtocolFig}
\end{figure}
In \textbf{Fig. \ref{ProtocolFig}}, a source node $o$ transmits a data packet to the sink $s$. For the pair $(o,s)$, we define the \textit{distance} as the length of the straight line connecting the source $o$ and the sink $s$. According to our system model, the distance is known, which we denote by $\ell_{s}$. If the sink $s$ is not located within the transmission range of the source $o$, the data should be transmitted using the intermediate nodes of the PPP. Therefore, several hops might be needed to transmit the data from the source to the sink. Let $H_{s}$ denote the random number of hops between the source and a sink $s$. The probability of connecting the source $o$ and the sink $s$ with $h$ number of hops, $h= 1,2,...$, is computed in \cite{Harb08:ASO} as
\begin{align} 
\label{ProbHop}
\mathbb{P}(H_{s} = h) &= C_{\ell_{s}}[1 - e^{-\Lambda_{s}|A_{s}|}]^{h-1},
\end{align}
where $C_{\ell_{s}}$ is a constant which depends on the distance $\ell_{s}$ between the source and the sink node $s$ and $0 \leq C_{\ell_{s}} \leq 1$. Moreover, in (\ref{ProbHop}), $A_{s}$ denotes the intersection area between the transmission range of a node and its next node in a path which can be calculated as \cite{Harb08:ASO}
\begin{equation}
|A_{s}| = R^{2}[2\cos^{-1}(\frac{\ell_{s}}{2R})] - \sin(2\cos^{-1}(\frac{\ell_{s}}{2R}))].
\end{equation}
Hence, the expected value of the number of hops $H_{s}$ yields
%
\begin{align} 
\label{ExpHops}
\mathbb{E}[H_{s}] = \sum_{H_{s} = 1}^{h_{s,\max}} H_{s} \mathbb{P}(H_{s}) = C_{\ell_{s}} \sum_{H_{s} = 1}^{h_{s,\max}} H_{s}[1 - e^{-\Lambda_{s}|A_{s}|}]^{(H_{s}-1)}.
\end{align}
%
%

However, in our setting, there is a possibility of outage for a transmission between any pair of nodes $x_{i}$ and $x_{i+1}$; this means that the transmitter might require several attempts until a successful reception at the receiver is achieved. Let $K_{i}$, $i=1, 2, \dots$, denote the random variable representing the number of Bernoulli trials (time instants) needed for the first successful connection between the transmitter-receiver pair $x_{i}$ and $x_{i+1}$. Then we have
\begin{equation} 
\label{ProbGeometric}
\mathbb{P}(K_{i}=k_{i})= p_{s,\theta}(1-p_{s,\theta})^{k_{i}-1}.
\end{equation}
In words, the number of time instants (attempts) needed to achieve the first successful connection follows a geometric distribution.

The \textit{transmission time} $g_{s,\theta}$ between the source $o$ and the sink $s$ at round $\theta$ is given by
\begin{equation} 
\label{TimeTransmissionDefinition}
g_{s,\theta} = \sum_{i=1}^{H_{s}} K_{i}.
\end{equation}
The following proposition states the statistical characteristics of the transmission time.
\begin{proposition} 
\label{Prop1}
The transmission time $g_{s,\theta}$ is a random variable with the probability distribution
\begin{equation} 
\label{ProbTime}
\mathbb{P}(g_{s,\theta} = k) = C_{\ell_{s}} \sum_{h=1}^{\min{\{k,h_{s,\max}\}}} \binom{k - 1}{h - 1} p_{s,\theta}^{h}(1 - p_{s,\theta})^{k-h} [1 - e^{-\Lambda_{s}|A_{s}|}]^{h-1}, \hspace{10mm} k=1, 2, \dots,
\end{equation}
%
%
%
and the expected value
\begin{equation} 
\label{ExpTime}
\mathbb{E}[g_{s,\theta}] = \frac{C_{\ell_{s}} \sum_{H_{s} = 1}^{h_{s,\max}} H_{s}[1 - e^{-\Lambda_{s}|A_{s}|}]^{(H_{s}-1)}}{p_{s,\theta}}.
\end{equation}
\end{proposition}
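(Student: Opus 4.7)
The plan is to treat $g_{s,\theta}$ as a \emph{random sum} of i.i.d.\ geometric random variables and attack the two assertions separately: the distribution formula via conditioning on $H_s$ and the law of total probability, and the expectation via Wald's identity combined with equation~(\ref{ExpHops}).

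For the distribution, I would first recall that the $K_i$ in (\ref{ProbGeometric}) are i.i.d.\ $\mathrm{Geometric}(p_{s,\theta})$ and are independent of the number of hops $H_s$, whose distribution is given in (\ref{ProbHop}). The key intermediate fact is that, conditional on $H_s = h$, the sum $\sum_{i=1}^{h} K_i$ follows a negative binomial distribution, namely
\begin{equation*}
\mathbb{P}\!\left(\sum_{i=1}^{h} K_i = k \,\Big|\, H_s = h\right) = \binom{k-1}{h-1}\, p_{s,\theta}^{\,h}\,(1-p_{s,\theta})^{k-h},\qquad k \geq h.
\end{equation*}
This I would justify either by induction on $h$ (convolving a geometric summand with the induction hypothesis) or, more cleanly, by the combinatorial argument that $\sum_{i=1}^h K_i = k$ iff the $k$-th Bernoulli trial is the $h$-th success and exactly $h-1$ of the preceding $k-1$ trials are successes. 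Applying the law of total probability then yields
\begin{equation*}
\mathbb{P}(g_{s,\theta}=k) \;=\; \sum_{h=1}^{\infty} \mathbb{P}\!\left(\textstyle\sum_{i=1}^{h} K_i = k \,\big|\, H_s=h\right)\mathbb{P}(H_s=h),
\end{equation*}
and two truncations reduce the range of $h$ to $1 \leq h \leq \min\{k,h_{s,\max}\}$: the upper cutoff $h_{s,\max}$ comes from the definition of the maximum number of hops, while the cutoff at $k$ comes from the binomial coefficient, since $\binom{k-1}{h-1}=0$ when $h>k$. Substituting (\ref{ProbHop}) and pulling out $C_{\ell_s}$ gives exactly (\ref{ProbTime}).

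For the expectation, I would invoke Wald's identity, which applies because $H_s$ is a positive integer-valued random variable with finite mean (it is almost surely bounded by $h_{s,\max}$) and the $K_i$ are i.i.d.\ and independent of $H_s$. This gives
\begin{equation*}
\mathbb{E}[g_{s,\theta}] \;=\; \mathbb{E}[H_s]\,\mathbb{E}[K_1] \;=\; \frac{\mathbb{E}[H_s]}{p_{s,\theta}},
\end{equation*}
since a $\mathrm{Geometric}(p_{s,\theta})$ variable has mean $1/p_{s,\theta}$. Plugging in (\ref{ExpHops}) for $\mathbb{E}[H_s]$ immediately yields (\ref{ExpTime}). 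Alternatively, if I want to avoid citing Wald, the same identity follows in two lines from the tower property, $\mathbb{E}[g_{s,\theta}]=\mathbb{E}[\mathbb{E}[\sum_{i=1}^{H_s}K_i\mid H_s]]=\mathbb{E}[H_s/p_{s,\theta}]$.

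The main obstacle is not any single calculation but a modeling subtlety that must be made explicit: the independence of $H_s$ from the sequence $\{K_i\}$. The distribution of $H_s$ in (\ref{ProbHop}) is derived from the spatial PPP and the per-link Bernoulli success probabilities at the route-\emph{selection} stage, whereas each $K_i$ counts retransmission attempts on an already-fixed link; I would state this as a standing assumption so that the conditioning step and Wald's identity are both legitimate. Once that is cleared, the proof reduces to recognizing the negative binomial and the straightforward substitution above.
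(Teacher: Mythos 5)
Your proposal is correct and follows essentially the same route as the paper: condition on $H_s$, recognize the conditional law of $\sum_{i=1}^h K_i$ as negative binomial, truncate the sum to $\min\{k,h_{s,\max}\}$, and obtain the mean via $\mathbb{E}[g_{s,\theta}]=\mathbb{E}[K_1]\mathbb{E}[H_s]$ (the paper states this directly from independence rather than naming Wald). Your explicit flagging of the independence between $H_s$ and the $\{K_i\}$ is a worthwhile clarification the paper leaves implicit, but it does not change the argument.
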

\begin{proof}
See Appendix \ref{ProOneProof}.
\end{proof}
We observe that the expected transmission time depends on the outage parameter $p_{s,\theta}$; therefore, in a dynamic environment where $p_{s,\theta}$ is piece-wise constant, $g_{s,\theta}$ has a piece-wise constant mean, as given by (\ref{ExpTime}).
\subsubsection{Delay Time and Reward}
\label{subsec:DelayTime}
Finally, the following proposition characterizes the statistics of the variable reward.
\begin{proposition} 
\label{Prop2}
Reward $r_{s,\theta}$ is a random variable with Bernoulli distribution. Moreover, it has a piece-wise constant expected value as 
\begin{equation}
\mu_{s,\theta} = C_{\ell_{s}} \sum_{k=1}^{\floor{\delta}} {\Big[} {\Big(} 1 - e^{-(\rho_{s} - \lambda_{s,\theta})(\delta - k)} {\Big)} \sum_{h=1}^{\min{\{k,h_{s,\max}\}}} \binom{k - 1}{h - 1} p_{s,\theta}^{h}(1 - p_{s,\theta})^{k-h} [1 - e^{-\Lambda_{s}|A_{s}|}]^{h-1} {\Big]}.
\end{equation}
%
%
\end{proposition}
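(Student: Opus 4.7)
The plan is to observe first that the Bernoulli claim is immediate from the definition (\ref{RewardFormula}): since $r_{s,\theta}$ takes only the values $0$ and $1$, it is a Bernoulli random variable with parameter $\mu_{s,\theta} = \mathbb{P}(r_{s,\theta}=1) = \mathbb{P}(d_{s,\theta}\leq\delta)$. So the real work is computing $\mathbb{P}(d_{s,\theta}\leq\delta)$ with $d_{s,\theta}=f_{s,\theta}+g_{s,\theta}$, and then showing the result is piece-wise constant in $\theta$.

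Next, I would exploit the fact that $f_{s,\theta}$ is continuous (exponential) while $g_{s,\theta}$ is integer-valued and supported on $\{1,2,\dots\}$, together with the independence of processing time and transmission time that is implicit in the model (the processing happens at the server, the transmission depends on the wireless link/PPP, and no coupling has been introduced). Conditioning on the value of the transmission time,
\begin{equation}
\mathbb{P}(d_{s,\theta}\leq\delta)=\sum_{k\geq 1}\mathbb{P}(f_{s,\theta}\leq \delta-k)\,\mathbb{P}(g_{s,\theta}=k).
\end{equation}
Because $f_{s,\theta}\geq 0$ almost surely, the term $\mathbb{P}(f_{s,\theta}\leq\delta-k)$ vanishes whenever $k>\delta$, which truncates the sum at $k=\lfloor\delta\rfloor$ and produces the $\sum_{k=1}^{\lfloor\delta\rfloor}$ appearing in the proposition.

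Then I would substitute the two marginal distributions. From (\ref{ProbProcessTime}), integrating the exponential density gives $\mathbb{P}(f_{s,\theta}\leq \delta-k)=1-e^{-(\rho_s-\lambda_{s,\theta})(\delta-k)}$ for $k\leq\lfloor\delta\rfloor$, and from Proposition~\ref{Prop1} we plug in the explicit formula (\ref{ProbTime}) for $\mathbb{P}(g_{s,\theta}=k)$. Combining these yields exactly the claimed expression for $\mu_{s,\theta}$.

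Finally, the piece-wise constant property follows from Assumption~\ref{Asp1:Nonstationarity}: $\mu_{s,\theta}$ depends on $\theta$ only through $p_{s,\theta}$ and $\lambda_{s,\theta}$, both of which are piece-wise constant in $\theta$, so $\mu_{s,\theta}$ is piece-wise constant as well. I do not anticipate a serious technical obstacle; the one subtle point to flag carefully is the justification (or explicit assumption) of independence between $f_{s,\theta}$ and $g_{s,\theta}$, since without it the conditioning step above would not factor. I would also be careful about the boundary of the sum at $k=\lfloor\delta\rfloor$, noting that for non-integer $\delta$ the residual interval $(\lfloor\delta\rfloor,\delta]$ is absorbed into the factor $1-e^{-(\rho_s-\lambda_{s,\theta})(\delta-k)}$, which is positive there.
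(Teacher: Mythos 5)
Your proposal matches the paper's own proof essentially step for step: conditioning on the discrete transmission time, truncating the sum at $k=\lfloor\delta\rfloor$ because $\mathbb{P}(f_{s,\theta}\leq\delta-k)=0$ for $k>\lfloor\delta\rfloor$, substituting the exponential CDF and the distribution from Proposition~\ref{Prop1}, and invoking Assumption~\ref{Asp1:Nonstationarity} for piece-wise constancy. Your explicit flagging of the independence of $f_{s,\theta}$ and $g_{s,\theta}$ is a point the paper also relies on (stating it only in passing), so the argument is correct and equivalent.
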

\begin{proof}
See Appendix \ref{ProTwoProof}.
\end{proof}
\subsection{Cost}
\label{subsec:Cost}
Naturally, every offloading round results in some energy consumption due to data transmission to the server and data processing at the server. Consider a round $\theta$ in which the computational task is offloaded to a server $s$. We denote the total required energy by $c_{s,\theta}$. Due to the energy scarcity, we define the \textit{cost} in terms of the consumed energy. In general, the consumed energy, i.e., the cost $c_{s,\theta}$, is a function of the transmission time and processing time. More precisely, it consists of the following parts.
\begin{itemize}
\item The energy required for data transmission, denoted by $v_{g}(g_{s,\theta})p_{g}$, where $p_{g}$ is the energy consumption rate for data transmission. Note that $g_{s,\theta}$ represents the time required for sending the data from the user to the server $s$ at round $\theta$. However, we need to consider the time required for sending the data from the server back to the user at the same round $\theta$. We consider that the function $v_{g}(\cdot)$ takes into account this round trip, for instance, via additionally multiplying $g_{s,\theta}$ by $2$.
\item The energy required for data processing at the server, denoted by $v_{f}(f_{s,\theta})p_{f}$, where $p_{f}$ is the energy consumption rate for accomplishing the job.
\end{itemize}

Note that $p_{g}$ and $p_{f}$ are known system parameters. Generally, $v_{g}(\cdot)$ and $v_{f}(\cdot)$, can be any invertible function; in this paper, for the sake of computation, we consider linear functions. Consequently, we have 
\begin{equation} 
\label{cost}
c_{s,\theta} = a_{s} f_{s,\theta} + a^{\prime}_{s} g_{s,\theta} + a^{\dprime}_{s},  
\end{equation}
where $a_{s}$, $a^{\prime}_{s} > 0$, and $a^{\dprime}_{s} \geq 0$. Hence, $\min\limits_{s,\theta} c_{s,\theta} = a^{\prime}_{s} + a^{\dprime}_{s}$. Note that the cost $c_{s,\theta}$ takes its minimum when the data is successfully transmitted via only one hop and in the first attempt and also when the processing time $f_{s,\theta} = 0$.

The following proposition determines the statistical characteristics of the variable cost.
\begin{proposition} \label{Prop3}
The cost $c_{s,\theta} \geq a^{\prime}_{s} + a^{\dprime}_{s}$ for an offloading round $\theta$ between the user's device and any server $s$ is a random variable with the probability distribution as follows
\begin{align} \label{CostDist} \nonumber
\mathbb{P}(c_{s,\theta}=x) = \frac{C_{\ell_{s}}}{a_{s}} \sum_{k=1}^{\floor{\frac{x-a^{\dprime}_{s}}{a^{\prime}_{s}}}}{\Big[}&{\Big(}(\rho_{s} - \lambda_{s,\theta}) e^{-(\rho_{s} - \lambda_{s,\theta})(\frac{x-a^{\dprime}_{s}-a^{\prime}_{s}k}{a_{s}})}{\Big)} \times \\
& \times \sum_{h=1}^{\min{\{k,h_{s,\max}\}}} \binom{k - 1}{h - 1}p_{s,\theta}^{h}(1-p_{s,\theta})^{k-h}[1- e^{-\Lambda_{s}|A_{s}|}]^{h-1}{\Big]}.
\end{align}
%
Moreover, its expected value is equal to
\begin{equation} 
\label{ExpCost}
\eta_{s,\theta} = \frac{a_{s}}{\rho_{s} - \lambda_{s,\theta}}  + \frac{a^{\prime}_{s} C_{\ell_{s}}\sum_{H_{s}=1}^{h_{s,\max}} H_{s}[1-e^{-\Lambda_{s}|A_{s}|}]^{(H_{s}-1)}}{p_{s,\theta}} + a^{\dprime}_{s}.
\end{equation}
%
\end{proposition}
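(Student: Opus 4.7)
The plan is to exploit the decomposition $c_{s,\theta} = a_s f_{s,\theta} + a'_s g_{s,\theta} + a''_s$ together with the independence of the processing time $f_{s,\theta}$ and the transmission time $g_{s,\theta}$. These two random variables are supported on disjoint parts of the system (queue at the server versus multi-hop wireless channel), so independence is immediate from the model in Section \ref{sec:ProFor}. Since $f_{s,\theta}$ is continuous (exponential) and $g_{s,\theta}$ is integer-valued with distribution given by (\ref{ProbTime}), the cost $c_{s,\theta}$ is a continuous random variable whose density is a mixture indexed by the possible values of $g_{s,\theta}$.

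First, to derive (\ref{CostDist}), I would condition on $g_{s,\theta} = k$. Given this conditioning, $c_{s,\theta} = a_s f_{s,\theta} + a'_s k + a''_s$ is an affine transformation of an exponential with rate $\rho_s - \lambda_{s,\theta}$; the Jacobian $1/a_s$ together with the density in (\ref{ProbProcessTime}) evaluated at $f_{s,\theta} = (x - a'_s k - a''_s)/a_s$ gives the conditional density
\begin{equation*}
\mathbb{P}(c_{s,\theta}=x \mid g_{s,\theta}=k) = \frac{1}{a_s}(\rho_s - \lambda_{s,\theta}) \exp\!\Bigl(-(\rho_s - \lambda_{s,\theta}) \tfrac{x - a''_s - a'_s k}{a_s}\Bigr),
\end{equation*}
valid for $x \geq a'_s k + a''_s$. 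Unconditioning via the total probability formula and multiplying by $\mathbb{P}(g_{s,\theta} = k)$ from Proposition \ref{Prop1} yields
\begin{equation*}
\mathbb{P}(c_{s,\theta}=x) = \sum_{k \geq 1 : a'_s k + a''_s \leq x} \mathbb{P}(c_{s,\theta}=x \mid g_{s,\theta}=k)\,\mathbb{P}(g_{s,\theta}=k).
\end{equation*}
The support constraint $a'_s k + a''_s \leq x$ translates to $k \leq \lfloor (x - a''_s)/a'_s \rfloor$, which recovers the upper summation limit in (\ref{CostDist}); plugging in (\ref{ProbTime}) gives the stated double-sum expression.

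For the expected value (\ref{ExpCost}), linearity of expectation reduces the computation to the known moments of $f_{s,\theta}$ and $g_{s,\theta}$. Specifically, from (\ref{ProbProcessTime}) we have $\mathbb{E}[f_{s,\theta}] = 1/(\rho_s - \lambda_{s,\theta})$ as the mean of an exponential with rate $\rho_s - \lambda_{s,\theta}$, and from (\ref{ExpTime}) we already have $\mathbb{E}[g_{s,\theta}]$ in closed form. Assembling $\mathbb{E}[c_{s,\theta}] = a_s \mathbb{E}[f_{s,\theta}] + a'_s \mathbb{E}[g_{s,\theta}] + a''_s$ reproduces (\ref{ExpCost}) directly.

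The routine work is the change of variables for the exponential density and careful bookkeeping of the sum range; the only nontrivial point is justifying the upper limit $\lfloor (x - a''_s)/a'_s \rfloor$ and the fact that conditional on $g_{s,\theta}=k$ the density is nonzero only when $f_{s,\theta} = (x - a''_s - a'_s k)/a_s \geq 0$. Everything else follows from independence and the distributions already established in Propositions \ref{Prop1} and (\ref{ProbProcessTime}), so no significant obstacle is expected.
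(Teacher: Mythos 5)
Your proposal is correct and matches the paper's argument: both condition on the number of transmission attempts $g_{s,\theta}=k$, use the exponential density of $f_{s,\theta}$ to handle the affine transformation (the paper writes the conditional CDF and differentiates, you apply the change of variables directly, which is the same computation), truncate the sum at $k \leq \lfloor (x-a^{\dprime}_{s})/a^{\prime}_{s}\rfloor$ via the support constraint, and obtain the mean by linearity from $\mathbb{E}[f_{s,\theta}]$ and (\ref{ExpTime}). No gaps.
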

\begin{proof}
See Appendix \ref{ProThreeProof}.
\end{proof}
The user devices play a crucial role in multi-hop wireless networks. Such devices consume the energy stored in their batteries to participate in the process of computation offloading, necessitating a frequent recharge. Moreover, the energy resources of mobile devices and edge servers are often unsustainable and not environment-friendly. Consequently, we consider a limit for the energy spent during the computation offloading. We refer to this limit as the \textit{budget} and denote it by $B$. Naturally, $B$ is a deterministic constant and known to the user. Therefore, the offloading user continues to offload the computational jobs as long as the total spent energy, i.e., the total paid cost, does not exceed the budget $B$.
\section{Model and Solution based on Multi-Armed Bandits}
\label{sec:MAB}
To solve the server selection problem, we take advantage of a class of sequential optimization problems with limited information, namely, the Multi-Armed Bandit (MAB) problem \cite{Maghsudi17:MAB}. In this section, we formulate the server selection problem in the MAB framework and propose an algorithm to solve this problem.
\subsection{Budget-Limited Multi-Armed Bandits with Piece-wise Stationary Reward and Cost}
We consider an MAB problem which portraits a player (device) facing a number of arms (servers). We denote the set of arms of the MAB by $\mathcal{S}=\{1, 2, \dots, S\}$.  By pulling an arm $i \in \mathcal{S}$ in each round $\theta=1,2,...$, the player pays some cost $c_{i,\theta}$ and receives some reward $r_{i,\theta}$. We assume that the random process of reward and cost are unknown a priori and piece-wise stationary. Reward and cost of each arm $i \in \mathcal{S}$ follow a probability distribution with mean $\mu_{i,\theta}$ and $\eta_{i,\theta}$ at round $\theta$, respectively. The rewards are upper bounded, i.e., there is a constant $r_{\max}>0$ such that $0 \leq r_{i,\theta} \leq r_{\max}$,~$\forall i,\theta$. The costs are lower bounded, i.e., there is a constant $0 < c_{min}$ such that $c_{\min} \leq c_{i,\theta}$~$\forall i,\theta$. The player can continue gambling as long as its cumulative cost remains below a given budget $B$. 
Ideally, the player's goal is to maximize its expected accumulated reward until the last round, which we refer to as the \textit{stopping round}. We denote by $T^{\ast}(B)$ and $T(B)$ the stopping round of the optimal policy (known as Oracle) and the stopping round of the applied policy, respectively. Formally, the problem can be formulated as
\begin{align} \label{OptPro1} \nonumber
\underset{I_{\theta} \in \mathcal{S}}{\textup{maximize}}~~&\mathbbm{E}{\Bigg [} \sum_{\theta=1}^{T(B)} r_{I_{\theta},\theta} {\Bigg ]} 
\\
\text{s.t.} \hspace{5mm} &\sum_{\theta=1}^{T(B)} c_{I_{\theta},\theta} \leq B,
\end{align}
%
where $I_{\theta}$ denotes the played arm at round $\theta$.

The Problem (\ref{OptPro1}) is infeasible to solve since the instantaneous outcome of the random variables reward and cost are not known a priori. Moreover, $T(B)$ is a random variable because it depends on the summation of some random variable cost, which by itself depends on the choice of the arm. Therefore, we suggest an alternative problem formulation, as described in the following. First, we define the utility in a way that it includes both reward and cost revealed by an arm upon pulling. Such utility can be used to evaluate the efficiency of a choice of arm as it takes both the reward and cost into account. More precisely, we define the \textit{utility} as reward per cost. Formally,
\begin{align}
\label{eq:Utility}
u_{I_{\theta},\theta}=\frac{r_{I_{\theta},\theta}}{c_{I_{\theta},\theta}}.
\end{align}
We then define the \textit{regret} as the difference between the accumulated reward of Oracle and the accumulated reward of the player under the applied policy. Formally,
\begin{equation}
R_{T(B)} = \sum_{\theta = 1}^{T^{\ast}(B)} r_{i^{\ast}_{\theta},\theta} -
\sum_{\theta = 1}^{T(B)} r_{I_{\theta},\theta},
\end{equation}
where $i^{\ast}_{\theta} = \argmaxA\limits_{i \in \mathcal{S}} \frac{\mu_{i,\theta}}{\eta_{i,\theta}}$ is the arm chosen by Oracle at round $\theta$. Then the player's goal is to minimize the expected regret, i.e.,
\begin{align} 
\label{OptPro2}
\underset{I_{\theta} \in \mathcal{S}}{\textup{minimize}}~~\mathbb{E}[R_{T(B)}].
\end{align}

We propose \textbf{Algorithm \ref{Alg1}} to solve the Problem (\ref{OptPro2}). In this algorithm, we define the average reward and cost as 
\begin{align}
\label{eq:raverage}
\bar{r}_{\theta}(\tau,i) = \frac{\sum\limits_{k=\max\{1, \theta-\tau+1\}}^{\theta} r_{i,k} \mathbbm{1}_{\{I_k = i\}}}{N_{\theta}(\tau,i)}, 
\end{align}
and 
\begin{align}
\label{eq:caverage}
\bar{c}_{\theta}(\tau,i) = \frac{\sum\limits_{k=\max\{1, \theta-\tau+1\}}^{\theta} c_{i,k}\mathbbm{1}_{\{I_k= i\}}}{N_{\theta}(\tau,i)},
\end{align}
respectively, where 
%
$N_{\theta}(\tau,i) = \sum\limits_{ k=\max\{1, \theta-\tau+1\}}^{\theta} \mathbbm{1}_{\{I_k = i\}}$. We also define
\begin{align}
\label{eq:padding}
E_{\theta}(\tau,i) = \frac{(1 + \frac{r_{\max}}{c_{\min}}) r_{\max} \sqrt{\frac{\xi \log{(min\{\theta,\tau\})}}{N_{\theta}(\tau,i)}}}{c_{\min} - r_{\max} \sqrt{\frac{\xi \log{(min\{\theta,\tau\})}}{N_{\theta}(\tau,i)}}},
\end{align}
where $\xi$ and $\tau$ are tunable parameters. We will elaborate on the choice of these parameters later in Section \ref{sec:NumAna}.

In the initialization phase, Algorithm \ref{Alg1} solely explores the set of arms by selecting each arm once and observing its reward and cost. It then uses the observations to develop an initial approximation for the Upper Confidence Bound (UCB) on the reward-to-cost ratio for each arm. Afterward, the algorithm continues selecting arms until the accumulated cost exceeds the budget $B$. In this stage, at each round $\theta$, the algorithm first calculates the UCB index $\mathcal{I}_{i,\theta}$ for each arm $i \in \mathcal{S}$ and then selects the arm with the highest index. 

As a comparison to other budgeted MAB algorithms, such as KUBE \cite{Tran12:KBO} and UCB-BV1 \cite{Ding13:MAB}, BPRPC-SWUCB is able to detect the changes in the mean reward or mean cost faster and thereby comply faster with the abrupt changes in the environment. This is due to the fact that BPRPC-SWUCB uses a window length $\tau$ and takes only the last $\tau$ observations to calculate the UCB index for each arm.
\begin{algorithm}[ht]
\caption{BPRPC-SWUCB: Budget-limited Piece-wise stationary Reward with Piece-wise stationary Cost-Sliding Window Upper Confidence Bound}
\label{Alg1}
\begin{algorithmic}[1]
\STATE \textbf{Input}: Window length $\tau$, parameters $\xi$, $r_{\max}$, and $c_{\min}$
\hspace{-2mm}
\FOR{$\theta = 1, \dots, S$}
\STATE Select arm $I_\theta=\theta$.
\STATE Observe the reward $r_{I_{\theta},\theta}$ and the cost $c_{I_{\theta},\theta}$.
\ENDFOR
\WHILE{$\sum_{k=1}^{\theta} c_{I_{k},k} \leq B$}
\STATE Calculate the index of each arm $i \in \mathcal{S}$ as
\begin{align}
\label{eq:index}
\mathcal{I}_{i,\theta} = \frac{\bar{r}_{\theta}(\tau,i)}{\bar{c}_{\theta}(\tau,i)} + E_{\theta}(\tau,i),
\end{align}
where $\bar{r}_{\theta}(\tau,i)$ and $\bar{c}_{\theta}(\tau,i)$ are defined in (\ref{eq:raverage}) and (\ref{eq:caverage}), respectively. Moreover, $E_{\theta}(\tau,i)$ is defined in (\ref{eq:padding}).
\STATE Select the arm $I_{\theta}$ with the highest index. Formally,
\begin{align}
\label{eq:selectedarm}
I_\theta = \text{arg\,max}_{i \in \mathcal{S}} \,\,\, \mathcal{I}_{i,\theta}.
\end{align}
\vspace{-5mm}
\STATE Observe the reward $r_{I_{\theta},\theta}$ and the cost $c_{I_{\theta},\theta}$.
\STATE Set $\theta = \theta + 1$.
\ENDWHILE
\end{algorithmic}
\end{algorithm}
\section{Analysis of BPRPC-SWUCB}
In this section, we prove an upper bound on the expected regret of the BPRPC-SWUCB. We use the following definition in the rest of this paper.
\begin{align} \label{delta}
\Delta(i) = \min{\Bigg\{}\frac{\mu_{i^{\ast}_{\theta},\theta}}{\eta_{i^{\ast}_{\theta},\theta}} - \frac{\mu_{i,\theta}}{\eta_{i,\theta}} \hspace{1mm} {\Bigg|} \hspace{1mm} \forall \theta \in \{1, \dots, T(B)\}~\text{s.t.}~i \neq i^{\ast}_{\theta} {\Bigg\}}.
\end{align}

We first prove an upper bound on the expected cumulative reward of the optimal policy in the following lemma.
\begin{lemma}
\label{Lemma1}
The solution of Problem (\ref{OptPro1}) is upper bounded by $\frac{(B+c_{\min})r_{\max}}{c_{\min}}$. 
\end{lemma}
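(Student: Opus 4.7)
The plan is to establish the bound deterministically, sample-path-wise, so that the expectation inherits it automatically. Two ingredients do all the work: the per-round reward cap $r_{\max}$ and the per-round cost floor $c_{\min}$, combined with the stopping rule that defines $T(B)$.

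First, I would pull the reward bound outside the sum. For any policy $\{I_\theta\}$ feasible for Problem (\ref{OptPro1}), the hypothesis $0 \leq r_{I_\theta,\theta} \leq r_{\max}$ gives, termwise,
\begin{equation*}
\sum_{\theta=1}^{T(B)} r_{I_\theta,\theta} \;\leq\; r_{\max}\, T(B).
\end{equation*}
So the task reduces to bounding the stopping round $T(B)$.

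Next, I would exploit the cost floor together with the stopping rule used throughout the paper, namely that the player continues to pull as long as the cumulative cost paid in previous rounds does not exceed $B$ (this is precisely the while-condition in Algorithm \ref{Alg1}). Under this rule, the cost accumulated through round $T(B)-1$ cannot exceed $B$, so
\begin{equation*}
(T(B)-1)\, c_{\min} \;\leq\; \sum_{\theta=1}^{T(B)-1} c_{I_\theta,\theta} \;\leq\; B,
\end{equation*}
which rearranges to $T(B) \leq (B+c_{\min})/c_{\min}$. Combining with the previous inequality yields
\begin{equation*}
\sum_{\theta=1}^{T(B)} r_{I_\theta,\theta} \;\leq\; \frac{(B+c_{\min})\, r_{\max}}{c_{\min}}
\end{equation*}
on every sample path, so the same bound applies to the expectation and hence to the supremum over feasible policies, including the Oracle's choice $I_\theta = i^{\ast}_\theta$. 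This is the claimed inequality.

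I do not anticipate any genuine technical obstacle here; the argument is two inequalities glued together. The only point requiring care is notational: being explicit that the $+c_{\min}$ slack in the bound comes from allowing the final pull to push the cumulative cost up to (but not arbitrarily beyond) $B$, consistent with the convention adopted in the definition of $T(B)$ and in the while-loop condition of BPRPC-SWUCB. Once that is fixed, no probabilistic machinery is needed; the bound is deterministic and uniform in the policy.
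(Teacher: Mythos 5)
Your proof is correct, but it takes a genuinely different and more elementary route than the paper's. You bound the objective deterministically on every sample path: the cost floor $c_{\min}$ caps the stopping round at $T(B)\leq (B+c_{\min})/c_{\min}$, and the reward cap $r_{\max}$ then finishes the job; no probabilistic machinery is needed and the expectation inherits the bound trivially. The paper instead works directly with expectations: it decomposes $\mathbbm{E}[\sum_\theta r_{I_\theta^\pi,\theta}]$ over arms and rounds via the events $\{I_\theta^\pi=i,\,B_\theta\geq 0\}$, converts expected reward into expected cost through the ratio $\mu_{i^\ast_\theta,\theta}/\eta_{i^\ast_\theta,\theta}$ (bounded by $r_{\max}/c_{\min}$), and then uses $\mathbbm{E}[\sum_\theta c_{I_\theta^\pi,\theta}]\leq B$ plus an additive $r_{\max}$ for the final pull. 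Your argument is shorter, strictly stronger (it is a pathwise bound, uniform over all feasible policies, not just a bound in expectation), and avoids the delicate step of passing between conditional expectations of reward and cost. What the paper's heavier machinery buys is an intermediate inequality of the form $\mathbbm{E}[\sum_\theta r]\lesssim \sup_\theta \frac{\mu_{i^\ast_\theta,\theta}}{\eta_{i^\ast_\theta,\theta}}\,B + r_{\max}$, which can be tighter than $\frac{r_{\max}}{c_{\min}}B+r_{\max}$ when the optimal reward-to-cost ratio is well below $r_{\max}/c_{\min}$, and whose structure is reused in the regret decomposition of Theorem \ref{Thm1}; for the lemma as stated, however, your two-line counting argument suffices. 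One small point of care, which you already flag: whether the final pull is allowed to overshoot $B$ (the while-loop convention) or not (the constraint in Problem (\ref{OptPro1})) only affects whether you get $T(B)\leq B/c_{\min}$ or $T(B)\leq B/c_{\min}+1$; both are consistent with the claimed bound.
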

\begin{proof}
See Appendix \ref{LemmaOneProof}.
\end{proof}
In the next theorem, we establish an upper bound on the expected regret of BPRPC-SWUCB.
\begin{theorem} \label{Thm1}
Let us denote by $\Upsilon_{T(B)}$ the number of change points before the stopping round $T(B)$ corresponding to both the reward and cost distribution. If there exists $c_{\max} > 0$ such that $c_{i,\theta} \leq c_{\max}$ $\forall i,\theta$, then for $\xi > \frac{1}{2}$ and any integer $\tau$ we have
\begin{align} \label{result1}
    \mathbbm{E}[R_{T(B)}] \leq r_{\max} {\Bigg(} {\Big(} \frac{B}{c_{\min}}(1 - \frac{c_{\min}}{c_{\max}}) + 1 {\Big)} + \sum_{i=1}^{S}  {\Big(} C(\tau,i) \frac{B}{c_{\min}} \frac{\log{(\tau)}}{\tau} + \tau \Upsilon_{T(B)} + 2\log^{2}(\tau) {\Big)} {\Bigg)},
\end{align}
%
%
where
\begin{align} \label{Cofsuboptimal1}
    C(\tau,i) = {\Bigg(}\frac{2(1 + \frac{r_{\max}}{c_{\min}}) + \Delta(i)}{c_{\min} \Delta(i)}{\Bigg)}^{2} r_{\max}^{2}\xi \frac{\normalceil{\frac{B}{c_{\min}\tau}}}{\frac{B}{c_{\min}\tau}} + \frac{4}{\log{(\tau)}} \ceil{\frac{\log{(\tau)}}{\log{(1 + 4 \sqrt{1 - (2 \xi)^{-1}})}}}.
\end{align}
%
%
\end{theorem}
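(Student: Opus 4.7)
The plan is to decompose the expected regret in two stages. First, I would bound the accumulated reward of Oracle by Lemma 1, so that $\mathbb{E}[R_{T(B)}] \leq \frac{(B+c_{\min})r_{\max}}{c_{\min}} - \mathbb{E}\bigl[\sum_{\theta=1}^{T(B)} r_{I_\theta,\theta}\bigr]$. Using the bounds $c_{i,\theta} \in [c_{\min}, c_{\max}]$ and $r_{i,\theta} \leq r_{\max}$, I would split the gap between Oracle's reward and the algorithm's reward into a ``horizon mismatch'' term, which accounts for the different number of rounds Oracle and BPRPC-SWUCB can afford under budget $B$, and an ``arm mismatch'' term, which counts the number of times the algorithm pulls a suboptimal arm. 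The horizon mismatch term yields the $r_{\max}\bigl(\frac{B}{c_{\min}}(1-\frac{c_{\min}}{c_{\max}}) + 1\bigr)$ contribution, while the arm mismatch term reduces the problem to bounding $\mathbb{E}[N_{T(B)}(i)]$ for each suboptimal arm $i$, where $N_{T(B)}(i)$ counts the total pulls of $i$ up to the stopping round.

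Second, I would bound $\mathbb{E}[N_{T(B)}(i)]$ by adapting the sliding-window UCB argument of Garivier and Moulines to a ratio-type index with a budget constraint. Since the total number of rounds is at most $T(B) \leq B/c_{\min}$, I would partition rounds into (a) rounds within $\tau$ of a change point in either the reward or cost distribution, contributing at most $\tau \Upsilon_{T(B)}$ pulls per arm, and (b) rounds that lie inside a stationary sub-window of length at least $\tau$. For type (b) rounds, I would bound the event $\{I_\theta = i\}$ by the event that $\mathcal{I}_{i,\theta} \geq \mathcal{I}_{i^\ast_\theta,\theta}$, and argue this forces either an underestimation of $\mu_{i^\ast_\theta}/\eta_{i^\ast_\theta}$ or an overestimation of $\mu_i/\eta_i$ by at least $\Delta(i)/2$.

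To control these deviation events, I would derive Hoeffding-type concentration for the sliding-window averages $\bar{r}_\theta(\tau,i)$ and $\bar{c}_\theta(\tau,i)$, then convert simultaneous deviations of numerator and denominator into a deviation of the ratio using the algebraic identity $\frac{a}{b} - \frac{a'}{b'} = \frac{(a-a')b' - a'(b-b')}{b b'}$, together with $c_{i,\theta} \geq c_{\min}$. A key design choice is that the padding function $E_\theta(\tau,i)$ in \eqref{eq:padding} is exactly calibrated so that it majorizes $\frac{a}{b}-\frac{a'}{b'}$ whenever both $|a-a'|$ and $|b-b'|$ are at most $r_{\max}\sqrt{\xi \log(\min\{\theta,\tau\})/N_\theta(\tau,i)}$. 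Once I establish this, the probability of a bad event is controlled by a union bound with $\xi > 1/2$ summable via $\sum_n n^{-2\xi} < \infty$, and the threshold for ``sufficiently many pulls'' yields the leading constant $C(\tau,i)$ in \eqref{Cofsuboptimal1}. Multiplying by the $B/(c_{\min}\tau)$ sub-windows and adding the $2\log^2(\tau)$ tail term from the geometric peeling of the confidence interval over different values of $N_\theta(\tau,i)$ gives the final bound.

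The main obstacle I anticipate is the ratio-of-estimates structure of the index $\mathcal{I}_{i,\theta}$ combined with the random stopping time $T(B)$. Standard SW-UCB analyses handle a single estimator concentrating around a mean, whereas here I must translate a simultaneous deviation of $(\bar r,\bar c)$ into a one-sided deviation of $\bar r/\bar c$, which requires a careful Taylor-style linearization that the specific closed form of $E_\theta(\tau,i)$ is built to absorb. A secondary subtlety is replacing the deterministic horizon $T$ of the Garivier--Moulines bound with the random $T(B)$; I would handle this by using the deterministic upper bound $T(B) \leq B/c_{\min}$ inside expectations, which is what produces the $B/c_{\min}$ factor multiplying $C(\tau,i)\log(\tau)/\tau$.
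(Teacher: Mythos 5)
Your proposal follows essentially the same route as the paper's proof: the same decomposition of the regret into a horizon-mismatch term (bounded via $T(B) \geq B/c_{\max}$ and the fact that the optimal policy's total cost cannot exceed $B$) plus an arm-mismatch term counting suboptimal pulls, the same adaptation of the Garivier--Moulines sliding-window analysis (rounds near change points give $\tau \Upsilon_{T(B)}$, the three-event inclusion with the gap event excluded by the choice of the pull threshold $J(\tau)$, and the algebraic identity converting deviations of $\bar{r}_{\theta}(\tau,i)$ and $\bar{c}_{\theta}(\tau,i)$ into a ratio deviation absorbed by $E_{\theta}(\tau,i)$), and the same final substitution $T(B) \leq B/c_{\min}$. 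The only cosmetic difference is your opening appeal to Lemma \ref{Lemma1}, which is not actually what drives the bound once you carry out the two-term decomposition you describe.
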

\begin{proof}
See Appendix \ref{TheoremOneProof}.
\end{proof}
\begin{remark} \label{remardfortau}
Based on the distribution of cost in (\ref{CostDist}), we observe that $\forall i,\theta$, $\mathbbm{P}(c_{i,\theta} = c_{\max}) \rightarrow 0$ when $c_{\max} \rightarrow \infty$. Nevertheless, the regret bound (\ref{result1}) in Theorem \ref{Thm1} also holds true when the cost variable is unbounded from above, which is the case in the computation offloading problem discussed in this paper. In this case, when $c_{\max} \rightarrow \infty$, we achieve a regret bound of order $O(B)$.

If $\frac{c_{\min}}{c_{\max}} \rightarrow 1$ (which is the case in many problems, for example, in the problems where the cost is fixed for all arms or it is supported in a small interval), the first term in the regret bound tends to zero. In this case, by choosing $\tau = \sqrt{\frac{B \log(B)}{\Upsilon_{T(B)}}}$, and if we assume that the growth rate of the number of change points $\Upsilon_{T(B)}$ is $O(B^{\alpha})$, for some $\alpha \in [0,1)$, we achieve a regret bound of order $O{\Big(} {B^{\frac{(1+\alpha)}{2}} \sqrt{\log(B)}} {\Big)}$.
\end{remark}
\begin{remark}{Computational Complexity} \\
The computational complexity of BPRPC-SWUCB is linear with respect to the stopping round T(B). Note that BPRPC-SWUCB only stores the action and reward/cost history of the last $\tau$ rounds, hence it is more space-efficient compared to algorithms that rely on the full history. It has a linear computational complexity with respect to the window length $\tau$. Finally, depending on the search algorithm used to find the highest UCB index, the computational complexity can vary with respect to the number of arms $S$. For example, if we use the merge sort to sort the UCB indices of $S$ arms, BPRPC-SWUCB will have a complexity (with respect to the number of arms) of order $O(S \log S)$ \cite{Cormen09:ITA}.
\end{remark}
\section{Numerical Analysis}
\label{sec:NumAna}
In this section, we investigate the empirical performance of BPRPC-SWUCB algorithm using the theoretical results obtained in this paper. To this end, we consider a computation offloading problem and draw the reward and cost of selecting each server based on the corresponding probability distributions derived in Section \ref{sec:ProFor}.

The setting of our simulation is as follows: (i) We consider a network consisting of three edge servers, i.e., $\lvert \mathcal{S} \rvert = 3$; (ii) As demonstrated in Section \ref{subsec:Reward}, at each round $\theta$, we sample the reward $r_{s,\theta}$ of selecting each server $s \in \mathcal{S}$ from a Bernoulli distribution with the piece-wise constant mean $\mu_{s,\theta}$; (iii) The distribution for the cost is derived in Section \ref{subsec:Cost}. We can rewrite the probability distribution (\ref{CostDist}) for the cost $c_{s,\theta}$ as 
\begin{equation} \label{CostDist2}
    \mathbb{P}(c_{s,\theta} = x) = \begin{cases}
    C_{x} (\frac{\rho_{s} - \lambda_{s,\theta}}{a_{s}}) e^{-(\frac{\rho_{s} - \lambda_{s,\theta}}{a_{s}})x}, \hspace{10mm} x \geq a^{\prime}_{s}+a^{\dprime}_{s} \\
    0, \hspace{46.5mm} x < a^{\prime}_{s}+a^{\dprime}_{s}
    \end{cases}
\end{equation}
%
%
where $C_{x}$ is a constant which depends on $x$. For a fixed $x$, $C_{x}$ is finite due to the summations being finite. The probability distribution in (\ref{CostDist2}) is similar to an exponential distribution with the support $[a^{\prime}_{s}+a^{\dprime}_{s},\infty]$. In our simulation, we consider an exponential distribution with $a_{s} = 1$ and $a^{\prime}_{s}, a^{\dprime}_{s} = 1/2$, $\forall s \in \mathcal{S}$, and with the piece-wise constant mean $\eta_{s,\theta}$; (iv) We consider at most $6$ change points in the mean reward or mean cost (including the one corresponding to the initial round). \textbf{Table \ref{Table:MABParams}} summarizes the change points in the expected value of the reward and cost variables for each server together with their values.
\renewcommand{\arraystretch}{1.15}
\renewcommand{\tabcolsep}{1.4mm}
\begin{table}[b]
    {\footnotesize
    \begin{center}
    \caption{The list of mean rewards and mean costs associated with each server for different change points. The blank spaces represent there are no change points in those rounds, i.e., the mean remains the same as the previous change point.}
    \label{Table:MABParams}
    \begin{tabular}{|c|c|c|c|c|c|c|}
        \cline{2-7}
        \multicolumn{1}{c|}{} &
        \multicolumn{6}{c|}{Simulation Setting} \\
        \hline
        \multicolumn{1}{|c|}{Change Point} &
        \multicolumn{2}{c|}{Server 1} &
        \multicolumn{2}{c|}{Server 2} &
        \multicolumn{2}{c|}{Server 3} \\
\hline
    \begin{tabular}{r@{\;{=}\;}l}
        $\theta$ & $1$ \\
        $\theta$ & $500$ \\
        $\theta$ & $1000$ \\
        $\theta$ & $2000$ \\
        $\theta$ & $4000$ \\
        $\theta$ & $8000$
    \end{tabular} 
&
    \begin{tabular}{r@{\;{=}\;}l}
        $\mu_{1,1}$ & $0.5$ \\
        $\mu_{1,500}$ & $0.1$ \\
        $\mu_{1,1000}$ & $0.2$ \\
        $\mu_{1,2000}$ & $0.8$ \\
        $\mu_{1,4000}$ & $0.2$ \\
        \multicolumn{1}{l}{} 
    \end{tabular}
&
    \begin{tabular}{r@{\;{=}\;}l}
        $\eta_{1,1}$ & $1.1$ \\
        $\eta_{1,500}$ & $1.8$ \\
        \multicolumn{1}{l}{}  \\
        $\eta_{1,2000}$ & $1.2$ \\
        $\eta_{1,4000}$ & $1.5$ \\
        \multicolumn{1}{l}{}
  \end{tabular}
&
    \begin{tabular}{r@{\;{=}\;}l}
        $\mu_{2,1}$ & $0.4$ \\
        \multicolumn{1}{l}{}  \\
        $\mu_{2,1000}$ & $0.9$ \\
        $\mu_{2,2000}$ & $0.1$ \\
        $\mu_{2,4000}$ & $0.2$ \\
        $\mu_{2,8000}$ & $0.8$
    \end{tabular}
&
    \begin{tabular}{r@{\;{=}\;}l}
        $\eta_{2,1}$ & $1.2$ \\
        $\eta_{2,500}$ & $1.9$ \\
        $\eta_{2,1000}$ & $1.1$ \\
        $\eta_{2,2000}$ & $1.2$ \\
        $\eta_{2,4000}$ & $1.9$ \\
        $\eta_{2,8000}$ & $1.1$
  \end{tabular}
&
    \begin{tabular}{r@{\;{=}\;}l}
        $\mu_{3,1}$ & $0.3$ \\
        $\mu_{3,500}$ & $0.8$ \\
        $\mu_{3,1000}$ & $0.3$ \\
        \multicolumn{1}{l}{} \\
        $\mu_{3,4000}$ & $0.9$ \\
        $\mu_{3,8000}$ & $0.1$
    \end{tabular}
&
    \begin{tabular}{r@{\;{=}\;}l}
        $\eta_{3,1}$ & $1.4$ \\
        $\eta_{3,500}$ & $1.1$ \\
        $\eta_{3,1000}$ & $1.9$ \\
        \multicolumn{1}{l}{} \\
        $\eta_{3,4000}$ & $1.1$ \\
        $\eta_{3,8000}$ & $1.6$
  \end{tabular}\\
\hline
    \end{tabular}
    \end{center}
    }
\end{table}

We compare our algorithm with the following MAB-based policies:
\begin{itemize}
    \item  \textbf{KUBE:} We consider a variant of the algorithm KUBE that calculates the index for each $s \in \mathcal{S}$ as  $(\bar{r}_{\theta}(s) + \sqrt{(2 \log{\theta} ) / N_{\theta}(s)}) / \bar{c}_{\theta}(s)$, where $\bar{r}_{\theta}(s) = (1 / N_{\theta}(s)) \sum_{k=1}^{\theta} r_{s,k} \mathbbm{1}_{\{I_k = s\}}$, $\bar{c}_{\theta}(s) = (1 / N_{\theta}(s)) \sum_{k=1}^{\theta} c_{s,k} \mathbbm{1}_{\{I_k = s\}}$, and $N_{\theta}(s) = \sum_{k=1}^{\theta} \mathbbm{1}_{\{I_k = s\}}$ \cite{Tran12:KBO}.
    \item \textbf{UCB1:} It calculates the index for each $s \in \mathcal{S}$ as $((\sum_{k=1}^{\theta} (r_{I_{k},k} / c_{I_{k},k}) \mathbbm{1}_{\{I_k = s\}})/N_{\theta}(s)) + r_{\max} \sqrt{(\xi^{\prime} \log{\theta}) / N_{\theta}(s)}$, where $\xi^{\prime}$ is a tunable parameter \cite{Auer02:FTA}.

    \item \textbf{UCB-based algorithm:} We define a policy which explores similar to UCB1 but exploits similar to BPRPC-SWUCB. By implementing this algorithm, we can compare the performance of our algorithm with a general UCB-based algorithm. It calculates an index as $(\bar{r}_{\theta}(s) / \bar{c}_{\theta}(s)) + (r_{\max} / c_{\min}) \sqrt{(\xi^{\dprime} \log{\theta} ) / N_{\theta}(s)}$, where $\xi^{\dprime}$ is a tunable parameter.
    
    \item \textbf{UCB-BV1:} For each $s \in \mathcal{S}$, this algorithm calculates a UCB index as $(\bar{r}_{\theta}(s) / \bar{c}_{\theta}(s)) + ((1 + \frac{1}{c_{\min}})\sqrt{\frac{\log(\theta - 1)}{N_{\theta}(s)}}) / (c_{\min} - \sqrt{\frac{\log(\theta - 1)}{N_{\theta}(s)}})$ \cite{Ding13:MAB}.

    \item \textbf{$\varepsilon$-Greedy:}  At each round $\theta$, $\varepsilon$-Greedy chooses an arm uniformly at random with probability $\varepsilon$ and the best arm so far with probability $1-\varepsilon$ \cite{Auer02:FTA}. 
\end{itemize}
To be comparable with other algorithms, we chose the system variables so that to fulfill the prerequisites of the other algorithms. The tuned parameters used in our simulation are listed in \textbf{Table \ref{Table:PolicyParams}}. Note that, based on our problem setting, we have $r_{\max}, c_{\min} = 1$.
\renewcommand{\arraystretch}{1.4}
\begin{table}[b]
    {\footnotesize
    \begin{center}
    \caption{The parameters of the different policies used in the simulation.}
    \label{Table:PolicyParams}
    \begin{tabular}{|c|c|c|c|c|c|c|}
    \cline{2-5}
        \multicolumn{1}{c|}{} &
        \multicolumn{4}{|c|}{Policy Setting} \\
        \hline
        Policy & UCB1 & BPRPC-SWUCB & $\varepsilon$-Greedy & UCB-based \\
        \hline
        \multirow{2}{*}{Parameters} & $\xi^{\prime} = 0.6$ & $\xi = 0.6$ & $\varepsilon = \frac{1}{\theta}$ & $\xi^{\dprime} = 0.6$ \\
        &  & $\tau = 2000$ & & \\
        \hline
    \end{tabular}
    \end{center}
    }
\end{table}

\textbf{Fig. \ref{Fig:All_Mean_Utilities}} depicts the evolution of the mean reward per mean cost for the three servers. The environment is dynamic in the sense that the optimal server in terms of the highest mean reward per mean cost changes over time. The change points can arise due to a change in mean reward, mean cost, or both. Note that, as mentioned before, the change points do not have to be identical; for example, at round $\theta = 1000$, the mean reward for server $1$ is changing while its mean cost remains fixed (Table \ref{Table:MABParams}).
\begin{figure}[t]
\begin{center}
\includegraphics[width=0.7\textwidth]{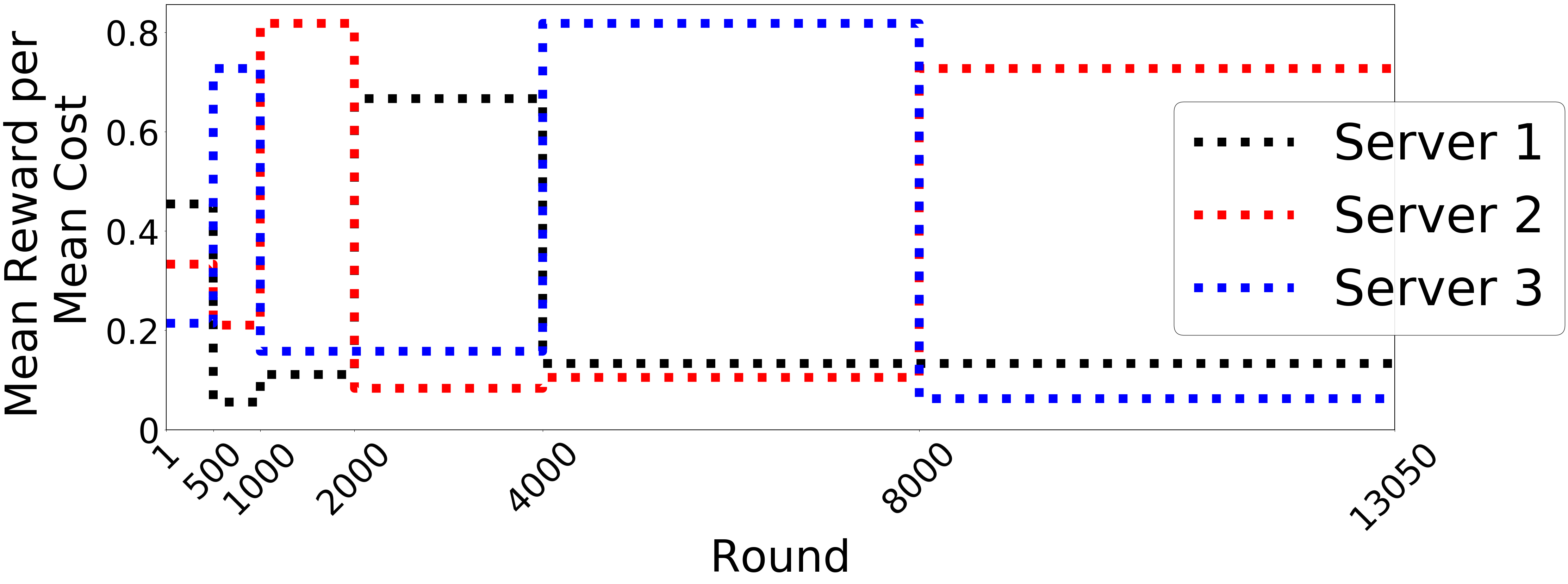}
\end{center}
\caption{Evolution of the mean reward per mean cost for each server.}
\label{Fig:All_Mean_Utilities}
\end{figure}

\textbf{Fig. \ref{Fig:All_Regrets}} depicts the simulation results of running different policies to solve the computation offloading problem in the aforementioned network with a given budget $B = 15000$. It shows the trend of regret for each policy. To be comparable, we truncated the graph of all policies at the smallest stopping round among the different policies. As we see, BPRPC-SWUCB surpasses all other policies and is able to conform faster to abrupt changes in the environment. As a result, BPRPC-SWUCB has a smoother curve where does not exist sudden jumps in the regret, unlike other policies. The regret of other policies grows faster than BPRPC-SWUCB especially close to change points. Note that algorithms other than BPRPC-SWUCB fail in their performance due to their nature; they are designed to perform well in a stationary environment. 
\begin{figure}[b]
\begin{center}
\includegraphics[width=0.7\textwidth]{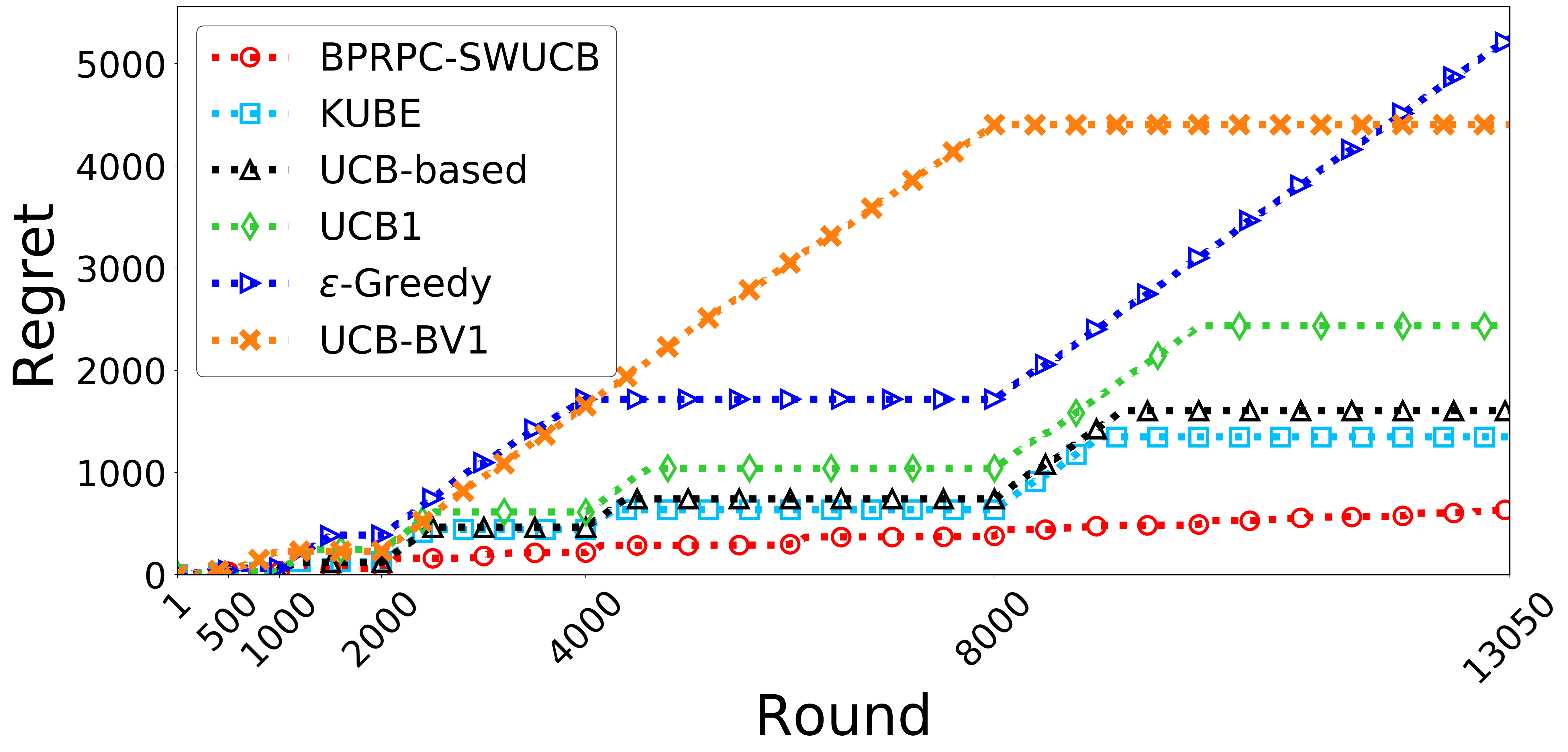}
\end{center}
\caption{Regret of different policies for a given same budget.}
\label{Fig:All_Regrets}
\end{figure}

\textbf{Fig. \ref{Fig:EmpMeanUtility}} depicts the highest mean reward per mean cost at each round, which is known to Oracle, and the empirically computed average reward per average cost of the chosen server by the other policies at each round. This figure illustrates well why BPRPC-SWUCB is performing better than other policies; it chooses the optimal server in more number of rounds (compared to other policies) due to its ability to detect the changes in the environment.
\begin{figure}[t]
\begin{center}
\includegraphics[width=0.6\textwidth]{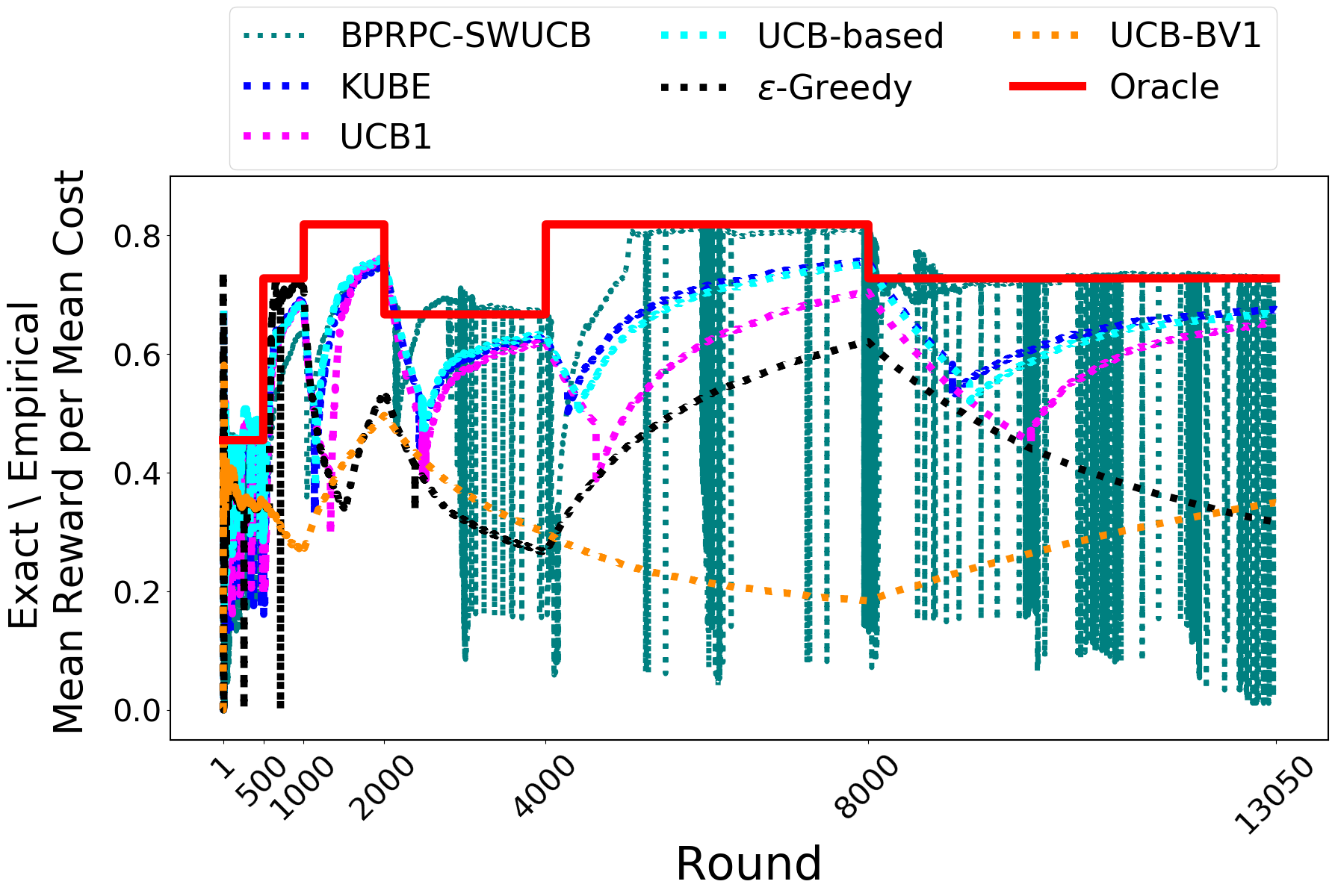}
\end{center}
\caption{The highest mean reward per mean cost at each round chosen by Oracle and the empirically computed average reward per average cost of the chosen server by different policies at each round.}
\label{Fig:EmpMeanUtility}
\end{figure}

\textbf{Fig. \ref{Fig:ServerChoice}} compares the performance of BPRPC-SWUCB with Oracle in terms of the choice of servers. As expected, in the first few rounds, mainly before the second change point at $\theta = 500$, BPRPC-SWUCB is investing more on exploring the servers to approximate the mean reward per mean cost of each server, and after $\theta = 500$, it detects the best server in most of the rounds even if there are sudden changes afterwards. This is due to using a sliding window $\tau$ which helps to detect the best server faster. We see that BPRPC-SWUCB has reasonably good performance compared to Oracle.
\begin{figure}[b]
\begin{center}
\includegraphics[width=0.85\textwidth]{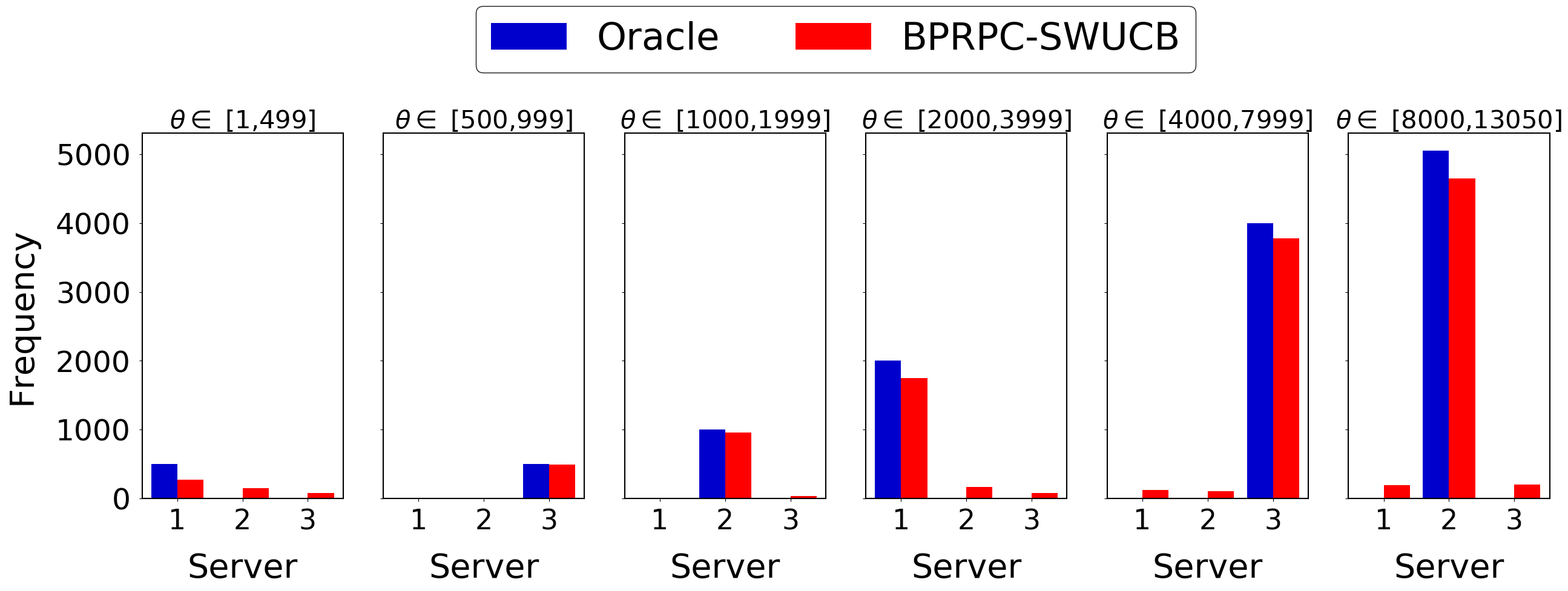}
\end{center}
\caption{Server choice for Oracle vs. BPRPC-SWUCB.}
\label{Fig:ServerChoice}
\end{figure}

\begin{figure}[ht]
\begin{center}
    \begin{subfigure}[ht]{0.48\textwidth}
    \centering
        \includegraphics[width = 0.9\textwidth]{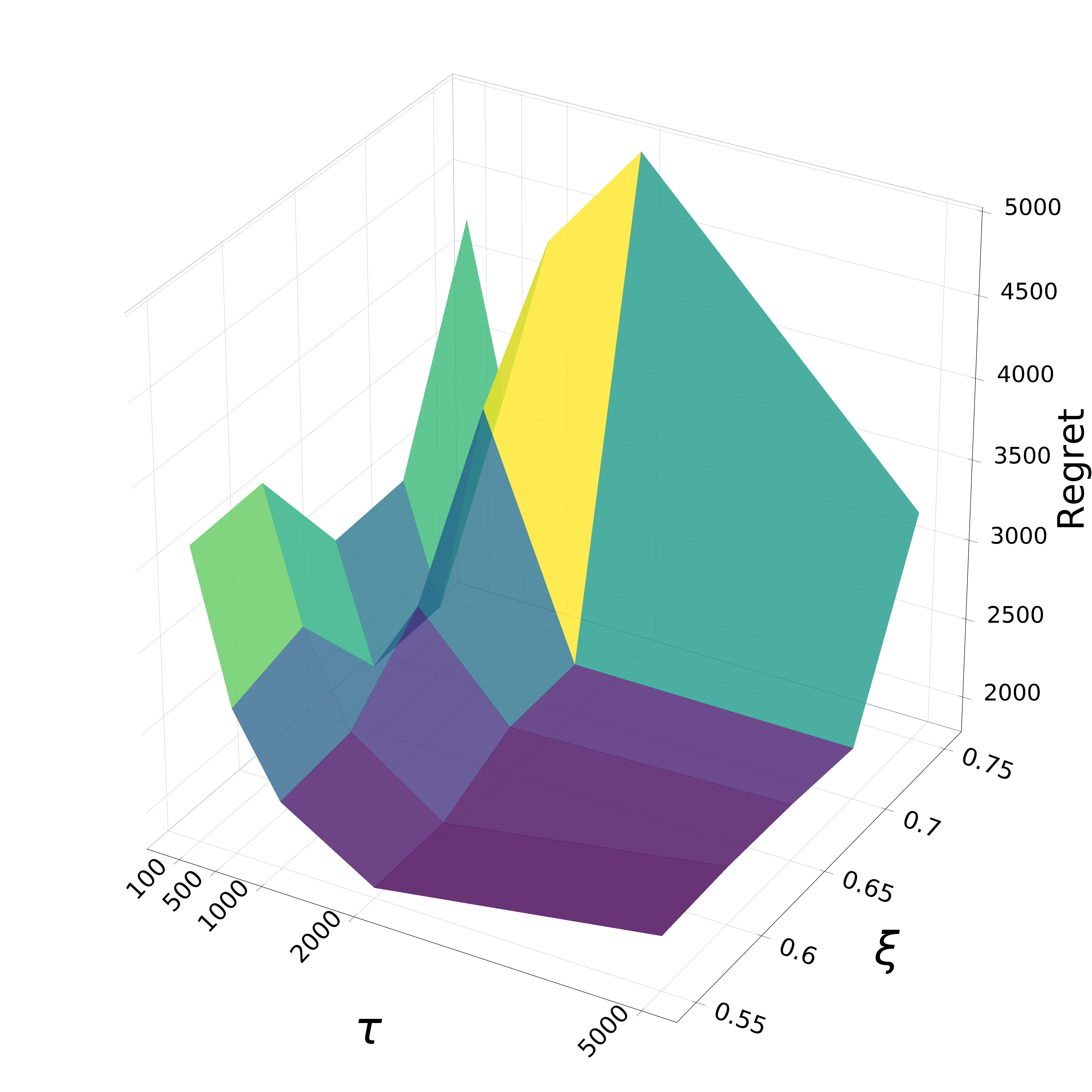}
        \caption{}
        \label{SubFig:Effect_of_Params1}
    \end{subfigure} 
    \begin{subfigure}[ht]{0.48\textwidth}
    \centering
    \vspace{18mm}
        \includegraphics[width = 1\textwidth]{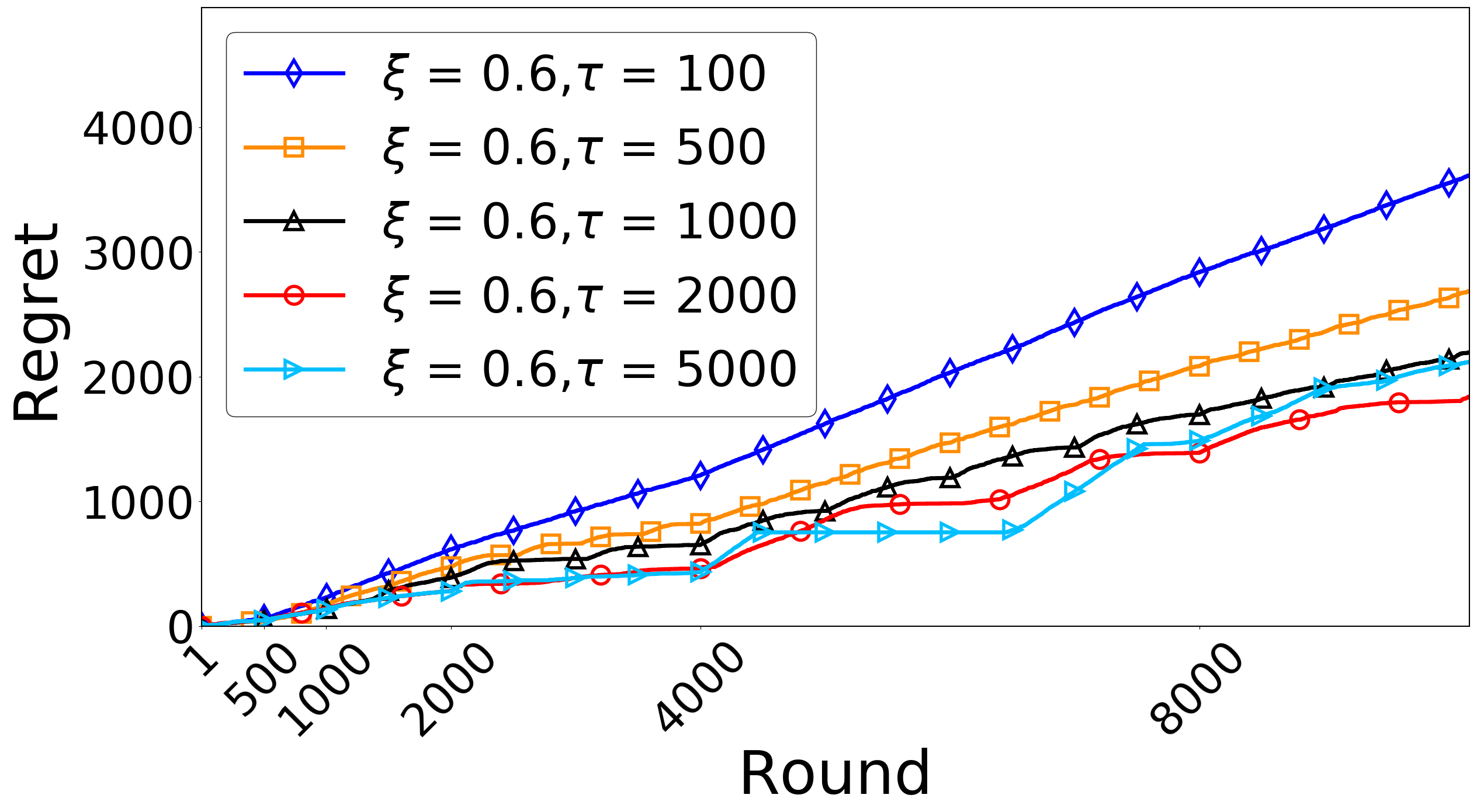}
        \vspace{5mm}
        \caption{}
        \label{SubFig:Effect_of_Params2}
    \end{subfigure}
\end{center}
\caption{The effect of parameters on the performance of BPRPC-SWUCB; \ref{SubFig:Effect_of_Params1}: Regret obtained for different $\xi$ and $\tau$. \ref{SubFig:Effect_of_Params2}: Regret for $\xi = 0.6$ and different window lengths $\tau$.}
\label{Fig:Effect_of_Params}
\end{figure}
As mentioned earlier, the performance of BPRPC-SWUCB highly depends on the choice of parameters. To better demonstrate this, we have shown the effect of parameters in \textbf{Fig. \ref{Fig:Effect_of_Params}}. Fig. \ref{SubFig:Effect_of_Params1} depicts an overview of the amount of regret obtained for different choices of the parameters, namely $\xi$ and the window length $\tau$. We see that for smaller values of $\xi$ and larger values of $\tau$ we have smaller regret. This graph is also obtained for a given budget $B = 15000$. Fig. \ref{SubFig:Effect_of_Params2} shows the trend of regret for a slice of the previous figure corresponding to $\xi = 0.6$. It clearly shows that for $\xi = 0.6$, a bigger $\tau$ results in a smaller regret. 

\begin{remark} \label{rmk:parselection} Parameter Selection \\
Fig. \ref{Fig:Effect_of_Params} might appear different for a problem with different settings, for example, a problem with different change points, number of change points, number of arms, and so on. Hence, the parameters $\tau$ and $\xi$ should be chosen based on the given problem. Generally, $\xi$ controls the exploration power of the algorithm. A larger $\xi$ results in giving more importance to the exploration rather than exploiting the arm which shows promising results. In problems with more number of arms, a larger $\xi$ can be useful. The window length $\tau$ is chosen based on the number and frequency of change points. In general, selecting a smaller $\tau$ would be more suitable if change points occur often. Moreover, a smaller $\tau$ results in storage efficiency. In an environment where the system variables change seldom, we may choose a larger $\tau$.
\end{remark}
\section{Conclusion}
\label{sec:Con}
In this paper, we mainly focused on the computation offloading problem in a dynamic network under uncertainty; nonetheless, the theoretical results are applicable in a number of contexts. We derived the probability distribution of the required time for data transmission from the user's device to an edge server. Moreover, we analyzed the probability distribution of the required time for data processing in a server. By leveraging the aforementioned distributions, we derived the probability distribution of total required time and energy for the whole offloading process. In addition, we defined the reward and cost in terms of the required time and energy in each offloading round, respectively, and we derived the corresponding probability distributions. We then cast the server selection problem in the MAB framework. We developed a novel UCB-based algorithm, namely BPRPC-SWUCB, to solve the formulated problem. We analyzed BPRPC-SWUCB theoretically by proving an upper bound on its expected regret. The numerical results demonstrated that BPRPC-SWUCB performs well in a non-stationary environment.
\section{Appendix}
\label{sec:App}
%
\subsection{Proof of Proposition \ref{Prop1}} 
\label{ProOneProof}
Fix a sink node $s$ and a round $\theta$. We will derive the probability distribution of $g_{s,\theta}$ by finding the joint distribution of the transmission time $g_{s,\theta}$ and the number of hops $H_{s}$. From the basics of probability theory we have
\begin{align} 
\label{JointDist}
\mathbb{P}(g_{s,\theta} = k) = \sum_{h = 1}^{h_{s,\max}} \mathbb{P}(g_{s,\theta} = k,H_{s} = h) = \sum_{h=1}^{h_{s,\max}} \mathbb{P}(g_{s,\theta} = k | H_{s} = h) \mathbb{P}(H_{s} = h).
\end{align}
%
%
The second term $\mathbb{P}(H_{s} = h)$ is given in (\ref{ProbHop}). The first term is derived in the following. For $k < h$, it is trivial that $\mathbb{P}(g_{s,\theta} = k | H_{s} = h) = 0$, as the number of attempts to transmit the data to a server cannot be less than the number of required hops. For $k \geq h$, it is a negative binomial distribution, as proved in the following. 
\begin{align} 
\label{ConditionalDist}
\mathbb{P}(g_{s,\theta} = k | H_{s} = h) \stackrel{\text{(a)}}{=} \mathbb{P}(K_{1} + K_{2} + \dots + K_{h} = k) \stackrel{\text{(b)}}{=} \binom{k-1}{h-1} p_{s,\theta}^{h} (1-p_{s,\theta})^{k-h},
\end{align}
%
%
where $(a)$ follows from the definition of $g_{s,\theta}$ and using the given condition $H_{s} = h$. Moreover, $(b)$ follows from the fact that the sum of $h$ independent and identical geometric random variables $K_{i}$ with the same parameter $p_{s,\theta}$ results in a negative binomial distribution with parameters $h$ and $p_{s,\theta}$ \cite{Wroughton13:DBB}. Note that this form of negative binomial distribution corresponds to the probability that $k$ number of trials is needed until the $h$-th success occur. Therefore,
\begin{equation} \label{ConditionalDistmain}
  \mathbb{P}(g_{s,\theta} = k | H_{s} = h) = \begin{cases}
    \binom{k-1}{h-1} p_{s,\theta}^{h} (1-p_{s,\theta})^{k-h}, \hspace{10mm} k \geq h \\
    0, \hspace{46.5mm} k < h
    \end{cases}
\end{equation}
%
Summarizing the above results, we can write an equivalent form of (\ref{JointDist}) as follows
\begin{equation}
\label{JointDist1}
       \mathbb{P}(g_{s,\theta} = k) = \sum_{h=1}^{\min{\{k,h_{s,\max}\}}} \mathbb{P}(g_{s,\theta} = k | H_{s} = h) \mathbb{P}(H_{s} = h).
\end{equation}
Thus, the first part of the proposition, i.e., (\ref{ProbTime}), follows by substituting (\ref{ProbHop}) and (\ref{ConditionalDist}) in (\ref{JointDist1}).

Since all the variables $K_{i}$ are independent and have the same expected value, it holds
\begin{equation}
\label{eq:Exp1}
\mathbb{E}[g_{s,\theta}] = \mathbb{E}[K_{i}] \mathbb{E}[H_{s}].
\end{equation}
We have $\mathbb{E}[K_{i}] = \frac{1}{p_{s,\theta}}$, $\forall i$. Therefore, the second part of the proposition, i.e., (\ref{ExpTime}), follows by substituting (\ref{ExpHops}) in (\ref{eq:Exp1}). 
\subsection{Proof of Proposition \ref{Prop2}}
\label{ProTwoProof}
We have the distribution of the delay time $d_{s,\theta}$ as the convolution of the two probability distributions of processing time $f_{s,\theta}$ and the transmission time $g_{s,\theta}$. 
From the definition of the reward, we have $r_{s,\theta} \in \{0,1\}$. 
Moreover, for any server $s \in \mathcal{S}$ and any round $\theta$ we have
\begin{align}
    &\hspace{-3mm} P_{s} = P(r_{s,\theta} = 1) = P(d_{s,\theta} \leq \delta) 
    \stackrel{\text{(a)}}{=}  \sum_{k=1}^{\floor{\delta}} \mathbb{P}(f_{s,\theta} \leq \delta - k) \mathbb{P}(g_{s,\theta} = k), \\
    &\hspace{-3mm} P_{f} = P(r_{s,\theta} = 0) = P(d_{s,\theta} \geq \delta) = 1 - P(d_{s,\theta} \leq \delta) 
    \stackrel{\text{(b)}}{=} 1 - \sum_{k=1}^{\floor{\delta}} \mathbb{P}(f_{s,\theta} \leq \delta - k) \mathbb{P}(g_{s,\theta} = k),
\end{align}
where $(a)$ and $(b)$ follow from the following facts; $d_{s,\theta}$ is a random variable which is the sum of two independent random variables $f_{s,\theta}$ and $g_{s,\theta}$. Note that, $f_{s,\theta}$ is a continuous random variable whereas $g_{s,\theta}$ is a discrete random variable. Moreover, we have $\delta - k \geq 0$ for $k \leq \floor{\delta}$ and $\mathbb{P}(f_{s,\theta} \leq \delta - k) = 0$ for $k > \floor{\delta}$. 
%
%
%
We can calculate the distributions $P_{s}$ and $P_{f}$ using the distributions of $f_{s,\theta}$ and $g_{s,\theta}$. Finally, we have $P_{s} + P_{f} = 1$. Hence, $r_{s,\theta}$ is a Bernoulli random variable with expected value (success probability) $P_{s}$. Thus, the result follows from Assumption \ref{Asp1:Nonstationarity}.
\subsection{Proof of Proposition \ref{Prop3}}
\label{ProThreeProof}
To prove the distribution, we first start by deriving the Cumulative Distribution Function (CDF) of the random variable cost. This is not a trivial task since the random variable $c_{s,\theta}$ is the result of linear combination of a continuous random variable $f_{s,\theta}$ and a discrete random variable $g_{s,\theta}$. In the following, $F_{Z}$ and $f_{Z}$ denote the CDF and the PDF of the random variable $Z$, respectively. Fix a server $s$ and an offloading round $\theta$. We have
\begin{align} \nonumber
    F_{c}(&c_{s,\theta} = x) 
    = \mathbb{P}(c_{s,\theta} \leq x)
    = \sum_{k=1}^{\infty} \mathbb{P}(a_{s}f_{s,\theta} + a^{\prime}_{s}g_{s,\theta} + a^{\dprime}_{s} \leq x | g_{s,\theta} = k)\mathbb{P}(g_{s,\theta} = k) \\
    &= \sum_{k=1}^{\infty} \mathbb{P}(f_{s,\theta} \leq \frac{x-a^{\dprime}_{s}-a^{\prime}_{s}k}{a_{s}})\mathbb{P}(g_{s,\theta} = k) = \sum_{k=1}^{\infty} F_{f}(\frac{x-a^{\dprime}_{s}-a^{\prime}_{s}k}{a_{s}})\mathbb{P}(g_{s,\theta} = k).
\end{align}
%
%
Taking the derivative of the above equation yields
\begin{align} \nonumber
    f_{c}(&c_{s,\theta} = x) 
    = \frac{d}{dx} F_{c}(c_{s,\theta} = x) = \sum_{k=1}^{\infty} \frac{d}{dx} F_{f}(\frac{x-a^{\dprime}_{s}-a^{\prime}_{s}k}{a_{s}
    })\mathbb{P}(g_{s,\theta} = k) \\
    &= \sum_{k=1}^{\infty} \frac{1}{a_{s}} f_{f}(\frac{x-a^{\dprime}_{s}-a^{\prime}_{s}k}{a_{s}}) \mathbb{P}(g_{s,\theta} = k) \stackrel{\text{$(\ast)$}}{=} \frac{1}{a_{s}}\sum_{k=1}^{\floor{\frac{x-a^{\dprime}_{s}}{a^{\prime}_{s}}}} f_{f}(\frac{x-a^{\dprime}_{s}-a^{\prime}_{s}k}{a_{s}}) \mathbb{P}(g_{s,\theta} = k),
\end{align}
%
%
where $(\ast)$ follows from the fact that $f_{f}(\frac{x-a^{\dprime}_{s}-a^{\prime}_{s}k}{a_{s}}) = 0$ for $k > \floor{\frac{x-a^{\dprime}_{s}}{a^{\prime}_{s}}}$. The result follows by substituting the PDF of $f_{s,\theta}$ and the PMF of $g_{s,\theta}$, according to (\ref{ProbProcessTime}) and (\ref{ProbTime}), respectively. The expected value (\ref{ExpCost}) can be calculated by taking expectation from (\ref{cost}) and using the linearity property of the expected value operator.
\subsection{Proof of Lemma \ref{Lemma1}}
\label{LemmaOneProof}
For any policy $\pi$ (including the optimal policy), let $T^{\pi}(B)$ and $I^{\pi}_{\theta}$ denote its stopping round and its arm choice at round $\theta$, respectively. Moreover, let $B_{\theta}$ denote the budget left at round $\theta$ after pulling the arm $I^{\pi}_{\theta}$. Hence, $B_{\theta} = B - \sum_{k = 1}^{\theta} c_{I^{\pi}_{k},k}$. Inspired by \cite{Xia15:TSB}, we prove an upper bound on the expected cumulative reward of any policy $\pi$. We have
\begin{align} \nonumber
\mathbbm{E} {\Big[} \sum_{\theta=1}^{T^{\pi}(B)} r_{I^{\pi}_{\theta}, \theta} {\Big]} 
&\stackrel{\text{$(a)$}}{\leq} \sum_{i=1}^{S} \sum_{\theta = 1}^{\infty} \mathbbm{E} [ r_{i, \theta} | I^{\pi}_{\theta} = i, B_{\theta} \geq 0] \mathbbm{P}(I^{\pi}_{\theta} = i, B_{\theta} \geq 0) + r_{\max} \\ \nonumber
&\stackrel{\text{$(b)$}}{\leq} \sum_{i=1}^{S} \sum_{\theta = 1}^{\infty}  \frac{\mu_{i^{\ast}_{\theta},\theta}}{\eta_{i^{\ast}_{\theta},\theta}}  \mathbbm{E} [ c_{i,\theta} | I^{\pi}_{\theta} = i, B_{\theta} \geq 0] \mathbbm{P}(I^{\pi}_{\theta} = i, B_{\theta} \geq 0)
+ r_{\max} \\
& \leq \frac{r_{\max}}{c_{\min}} \mathbbm{E} {\Big[} \sum_{\theta=1}^{T^{\pi}(B)} c_{I^{\pi}_{\theta}, \theta} {\Big]} + r_{\max} \stackrel{\text{$(c)$}}{\leq} \frac{(B+c_{\min})r_{\max}}{c_{\min}},
\end{align}
%
where $(a)$ holds because of the definition of $B_{\theta}$, $(b)$ follows from $\frac{\mu_{i,\theta}}{\eta_{i,\theta}} \leq \frac{\mu_{i^{\ast}_{\theta},\theta}}{\eta_{i^{\ast}_{\theta},\theta}}$, $\forall i \in \mathcal{S}$, and $(c)$ holds because the algorithm stops before its total cost runs out of the budget $B$.
\subsection{Proof of Theorem \ref{Thm1}}
\label{TheoremOneProof}
Let $\tilde{N}_{T(B)}(i)$ denote the number of rounds arm $i$ has been played when it was not the optimal arm. Inspired by \cite{Garivier08:SWUCB} and \cite{Ding13:MAB}, we start by upper bounding the expected number of times a suboptimal arm was chosen given the stopping round $T(B)$. In the following, $\mathbbm{P}(X)$ and $\mathbbm{E}(X)$ represent the probability and expectation of the random variable $X$ under the policy of BPRPC-SWUCB, respectively. We first prove that for $i \in \mathcal{S}$ it holds.
\begin{align} \label{result2}
    &\mathbbm{E}[\tilde{N}_{T(B)}(i)|T(B)] \leq C(\tau,i) T(B) \frac{\log{(\tau)}}{\tau} + \tau \Upsilon_{T(B)} + 2\log^{2}(\tau),
\end{align}
%
%
where
\begin{align} \label{Cofsuboptimal2}
    C(\tau,i) = {\Bigg(}\frac{2(1 + \frac{r_{\max}}{c_{\min}}) + \Delta(i)}{c_{\min} \Delta(i)}{\Bigg)}^{2} r_{\max}^{2}\xi \frac{\normalceil{\frac{T(B)}{\tau}}}{\frac{T(B)}{\tau}} + \frac{4}{\log{(\tau)}} \ceil{\frac{\log{(\tau)}}{\log{(1 + 4 \sqrt{1 - (2 \xi)^{-1}})}}}.
\end{align}
%
%
Let $J(\tau) = \hspace{-1mm}{\Bigg(}\frac{2(1 + \frac{r_{\max}}{c_{\min}}) + \Delta(i)}{c_{\min} \Delta(i)}{\Bigg)}^{2} r_{\max}^{2}\xi\log{(\tau)}$. Moreover, define $\Gamma(\tau)$ as
\begin{align}
\Gamma(\tau)\hspace{-1mm}=\hspace{-1mm}{\Big\{}\theta \in \hspace{-0.5mm} \{S+1, \dots, T(B)\} {\Big|} \mu_{i,j}\hspace{-1mm}=\hspace{-1mm}\mu_{i,\theta} \hspace{1mm} \& \hspace{1mm} \eta_{i,j}\hspace{-1mm}=\hspace{-1mm}\eta_{i,\theta}, \forall i \in \hspace{-0.5mm} \{1, \dots, S\} \hspace{1mm} \& \hspace{1mm} \forall j \hspace{1mm} \text{s.t.} \hspace{1mm} \theta - \tau < j \leq \theta {\Big\}}.
\end{align}
%
%
We have the following \cite{Garivier08:SWUCB}
\begin{align} 
\nonumber
\tilde{N}_{T(B)}(i) &= 1 + \sum_{\theta=S+1}^{T(B)} \mathbbm{1}_{\{I_\theta = i \neq i^{\ast}_{\theta}\}} \leq 1 + \sum_{\theta=1}^{T(B)} \mathbbm{1}_{\{I_\theta = i \neq i^{\ast}_{\theta}, N_{\theta}(\tau,i) < J(\tau)\}} + \sum_{\theta=S+1}^{T(B)}  \mathbbm{1}_{\{I_\theta = i \neq i^{\ast}_{\theta}, N_{\theta}(\tau,i) \geq J(\tau)\}} \\ 
 &\stackrel{\text{$(\ast)$}}{\leq} 1 + \ceil{\frac{T(B)}{\tau}} J(\tau) + \tau \Upsilon_{T(B)} + \sum_{\theta \in \Gamma(\tau)}^{} \mathbbm{1}_{\{I_\theta = i \neq i^{\ast}_{\theta}, N_{\theta}(\tau,i) \geq J(\tau)\}} \label{avali},
\end{align}
%
%
where ($\ast$) follows from the Lemma (25) in \cite{Garivier08:SWUCB}. For $\theta \in \Gamma(\tau)$, we have
\begin{align} \nonumber
{\{I_\theta = i \neq i^{\ast}_{\theta}, N_{\theta}(\tau,i) \geq J(\tau)\}} &\subset \underbrace{\{\frac{\bar{r}_{\theta}(\tau,i)}{\bar{c}_{\theta}(\tau,i)} > \frac{\mu_{i,\theta}}{\eta_{i,\theta}} + E_{\theta}(\tau,i)\}}_{1} \cup \underbrace{\{\frac{\bar{r}_{\theta}(\tau,i^{\ast}_{\theta})}{\bar{c}_{\theta}(\tau,i^{\ast}_{\theta})} < \frac{\mu_{i^{\ast}_{\theta},\theta}}{\eta_{i^{\ast}_{\theta},\theta}} - E_{\theta}(\tau,i^{\ast}_{\theta})\}}_{2} \\
&\hspace{1mm} \cup \underbrace{ \{\frac{\mu_{i^{\ast}_{\theta},\theta}}{\eta_{i^{\ast}_{\theta},\theta}} - \frac{\mu_{i,\theta}}{\eta_{i,\theta}} < 2E_{\theta}(\tau,i), N_{\theta}(\tau,i) \geq J(\tau)\}}_{3}.
\end{align}
%
%
For the Event $3$, we have
\begin{align}
E_{\theta}(\tau,i) =\frac{(1 + \frac{r_{\max}}{c_{\min}}) r_{\max} \sqrt{\frac{\xi \log{(min\{\theta,\tau\})}}{N_{\theta}(\tau,i)}}}{c_{\min} - r_{\max} \sqrt{\frac{\xi \log{(min\{\theta,\tau\})}}{N_{\theta}(\tau,i)}}} \leq \frac{(1 + \frac{r_{\max}}{c_{\min}}) r_{\max} \sqrt{\frac{\xi \log{(\tau)}}{J(\tau)}}}{c_{\min} - r_{\max} \sqrt{\frac{\xi \log{(\tau)}}{J(\tau)}}} = \frac{\Delta(i)}{2}.
\end{align}
%
%
Therefore, the Event $3$ never occurs. Upper bound for the Events $1$ and $2$ are similar and we show only for Event $1$. Note that if Event $1$ occurs, it implies that at least one of the two following inequalities happens.
\begin{align} \label{oneofthem1}
\bar{r}_{\theta}(\tau,i) > \mu_{i,\theta} + e_{\theta}(\tau,i),
\end{align}
or
\begin{align} \label{oneofthem2}
\bar{c}_{\theta}(\tau,i) < \eta_{i,\theta} - e_{\theta}(\tau,i), 
\end{align}
where 
\begin{equation} 
e_{\theta}(\tau,i) = r_{\max} \sqrt{\frac{\xi\log{(\min\{\theta,\tau\})}}{N_{\theta}(\tau,i)}}. 
\end{equation} 
To prove this, assume none of them happens. Therefore, we have \cite{Ding13:MAB}
\begin{align} \nonumber
\frac{\bar{r}_{\theta}(\tau,i)}{\bar{c}_{\theta}(\tau,i)} - \frac{\mu_{i,\theta}}{\eta_{i,\theta}} &= \frac{(\bar{r}_{\theta}(\tau,i) - \mu_{i,\theta}) \eta_{i,\theta} + (\eta_{i,\theta} - \bar{c}_{\theta}(\tau,i)) \mu_{i,\theta}}{\bar{c}_{\theta}(\tau,i) \eta_{i,\theta}}
\leq \frac{e_{\theta}(\tau,i)}{\bar{c}_{\theta}(\tau,i)} + \frac{e_{\theta}(\tau,i) \mu_{i,\theta}}{\bar{c}_{\theta}(\tau,i) \eta_{i,\theta}} \\ &\leq \frac{e_{\theta}(\tau,i)}{c_{\min} - e_{\theta}(\tau,i)} + \frac{e_{\theta}(\tau,i) r_{\max}}{(c_{\min} - e_{\theta}(\tau,i)) c_{\min}} = E_{\theta}(\tau,i).
\end{align}
%
%
Hence, we upper bound the probability of (\ref{oneofthem1}) and (\ref{oneofthem2}). Using Corollary (21) in \cite{Garivier08:SWUCB} for any $\nu > 0$ we have
\begin{align} 
\label{dovomi}
\mathbbm{P}(\bar{r}_{\theta}(\tau,i) > \mu_{i,\theta} + e_{\theta}(\tau,i)) &\leq \ceil{\frac{\log{(\min\{\theta,\tau\})}}{\log{(1+\nu)}}} (\min\{\theta,\tau\})^{-2\xi(1-\frac{\nu^{2}}{16})},
\end{align}
%
%
and
\begin{align} \label{sevomi}
\mathbbm{P}(\bar{c}_{\theta}(\tau,i) < \eta_{i,\theta} - e_{\theta}(\tau,i)) &\leq \ceil{\frac{\log{(\min\{\theta,\tau\})}}{\log{(1+\nu)}}} (\min\{\theta,\tau\})^{-2\xi(1-\frac{\nu^{2}}{16})}.
\end{align}
%
%
For the Event $2$, we have similar results as follows.
\begin{align} 
\label{charomi}
\mathbbm{P}(\bar{r}_{\theta}(\tau,i^{\ast}_{\theta}) > \mu_{i^{\ast}_{\theta},\theta} + e_{\theta}(\tau,i^{\ast}_{\theta})) &\leq \ceil{\frac{\log{(\min\{\theta,\tau\})}}{\log{(1+\nu)}}} (\min\{\theta,\tau\})^{-2\xi(1-\frac{\nu^{2}}{16})},
\end{align}
%
%
and
\begin{align} 
\label{panjomi}
\mathbbm{P}(\bar{c}_{\theta}(\tau,i^{\ast}_{\theta}) < \eta_{i^{\ast}_{\theta},\theta} - e_{\theta}(\tau,i^{\ast}_{\theta})) &\leq \ceil{\frac{\log{(\min\{\theta,\tau\})}}{\log{(1+\nu)}}} (\min\{\theta,\tau\})^{-2\xi(1-\frac{\nu^{2}}{16})}.
\end{align}
Choosing $\nu = 4 \sqrt{1 - \frac{1}{2 \xi}}$ as suggested in \cite{Garivier08:SWUCB}, combinig (\ref{avali}) and (\ref{dovomi})-(\ref{panjomi}), and taking expectation result in
%
%
%
\begin{align}
\mathbbm{E}[\tilde{N}_{T(B)}(i) | T(B)] \leq 1 + \ceil{\frac{T(B)}{\tau}} J(\tau) + \tau \Upsilon_{T(B)} + 4 \sum_{\theta=1}^{T(B)}\frac{\ceil{\frac{\log{(\min\{\theta,\tau\})}}{\log{(1+\nu)}}}}{\min\{\theta,\tau\}}.
\end{align}
%
%
We achieve the equation (\ref{result2}) using the following \cite{Garivier08:SWUCB}
\begin{align}
\sum_{\theta = S+1}^{T(B)} \frac{\log{(\min\{\theta,\tau\})}}{\min\{\theta,\tau\}} \leq \sum_{\theta=2}^{\tau} \frac{\log{(\theta)}}{\theta} + \sum_{\theta=1}^{T(B)} \frac{\log{(\tau)}}{\tau} \leq \frac{1}{2} \log^{2}{(\tau)} + \frac{T(B) \log{(\tau)}}{\tau}.
\end{align}
%
%

We rewrite the expected regret as
\begin{align} \label{ProblemSolved}
    \mathbbm{E}[R_{T(B)}] 
    &= \underbrace{{\Bigg (} \mathbbm{E}{\Bigg [} \sum_{\theta=1}^{T^{\ast}(B)} r_{i^{\ast}_{\theta},\theta} {\Bigg ]} - \mathbbm{E}{\Bigg [} \sum_{\theta=1}^{T(B)} r_{i^{\ast}_{\theta},\theta} {\Bigg ]} {\Bigg )}}_{1} + \underbrace{{\Bigg (} \mathbbm{E}{\Bigg [} \sum_{\theta=1}^{T(B)} r_{i^{\ast}_{\theta},\theta} {\Bigg ]} - \mathbbm{E}{\Bigg [} \sum_{\theta=1}^{T(B)} r_{I_{\theta},\theta} {\Bigg ]} {\Bigg )}}_{2}.
\end{align}

%
%
We bound each part separately. For the first term in Part 1, the approach is similar to the proof of Lemma \ref{Lemma1}. However, here we bound the difference between the total reward obtained by playing the optimal arm permanently but with two different stopping rounds: (i) The stopping round corresponding to the optimal policy and (ii) the stopping round of our policy. As before, we define $B_{\theta} = B - \sum_{k = 1}^{\theta} c_{I^{\pi}_{k},k}$. We have
\begin{align} \label{eq:inregret_first} \nonumber
\mathbbm{E} {\Bigg [} &\sum_{\theta=1}^{T^{\ast}(B)} r_{i^{\ast}_{\theta},\theta} {\Bigg ]} 
- \mathbbm{E} {\Bigg [} \sum_{\theta=1}^{T(B)} r_{i^{\ast}_{\theta},\theta} {\Bigg ]}
\stackrel{\text{$(a)$}}{\leq} 
\sum_{\theta = 1}^{\infty} \mathbbm{E} [ r_{i^{\ast}_{\theta}, \theta} | i^{\ast}_{\theta} = i^{\ast}_{\theta}, B_{\theta} \geq 0] \mathbbm{P}(i^{\ast}_{\theta} = i^{\ast}_{\theta}, B_{\theta} \geq 0) + r_{\max} \\ \nonumber
&- \sum_{\theta = 1}^{\infty} \mathbbm{E} [ r_{i^{\ast}_{\theta}, \theta} | I_{\theta} = i^{\ast}_{\theta}, B_{\theta} \geq c_{\max}] \mathbbm{P}(I_{\theta} = i^{\ast}_{\theta}, B_{\theta} \geq c_{\max}) \\ \nonumber
&=
\sum_{\theta = 1}^{\infty}  \frac{\mu_{i^{\ast}_{\theta},\theta}}{\eta_{i^{\ast}_{\theta},\theta}}  \mathbbm{E} [ c_{i^{\ast}_{\theta},\theta} | i^{\ast}_{\theta} = i^{\ast}_{\theta}, B_{\theta} \geq 0] \mathbbm{P}(i^{\ast}_{\theta} = i^{\ast}_{\theta}, B_{\theta} \geq 0) + r_{\max}\\ \nonumber
&- \sum_{\theta = 1}^{\infty}  \frac{\mu_{i^{\ast}_{\theta},\theta}}{\eta_{i^{\ast}_{\theta},\theta}}  \mathbbm{E} [ c_{i^{\ast}_{\theta},\theta} | I_{\theta} = i^{\ast}_{\theta}, B_{\theta} \geq c_{\max}] \mathbbm{P}(I_{\theta} = i^{\ast}_{\theta}, B_{\theta} \geq c_{\max}) \\
& \leq \frac{r_{\max}}{c_{\min}} {\Big(}
\mathbbm{E} {\Bigg [} \sum_{\theta=1}^{T^{\ast}(B)} c_{i^{\ast}_{\theta},\theta} {\Bigg ]} - \mathbbm{E} {\Bigg [} \sum_{\theta=1}^{T(B)} c_{i^{\ast}_{\theta},\theta} {\Bigg ]} {\Big)} + r_{\max}
\stackrel{\text{$(b)$}}{\leq} 
\frac{r_{\max}}{c_{\min}} (B - \frac{B}{c_{\max}} c_{\min}) + r_{\max},
\end{align}
%
where $(a)$ holds because of the definition of $B_{\theta}$ and $(b)$ follows from the following facts. The optimal policy stops before it runs out of the budget $B$. Hence, its total paid cost cannot exceed $B$. Moreover, we have $T(B) \geq \frac{B}{c_{\max}}$ and $c_{i,\theta} \geq c_{min}$, $\forall i,\theta$.

For Part 2, we have
\begin{align} \label{eq:inregret_second} \nonumber
    \mathbbm{E}{\Bigg [} \sum_{\theta=1}^{T(B)} r_{i^{\ast}_{\theta},\theta} {\Bigg ]} - \mathbbm{E}{\Bigg [} \sum_{\theta=1}^{T(B)} r_{I_{\theta},\theta} {\Bigg ]}
    = \mathbbm{E}{\Bigg [} \sum_{\theta=1}^{T(B)} (r_{i^{\ast}_{\theta},\theta} - r_{I_{\theta},\theta}) {\Bigg ]}
    & \leq r_{\max} \mathbbm{E}{\Bigg[}\sum_{\theta=1}^{T(B)} \sum_{i = 1}^{S} \mathbbm{1}_{\{I_\theta = i \neq i^{\ast}_{\theta}\}}{\Bigg]} \\
    &= r_{\max} \sum_{i=1}^{S} \mathbbm{E}[\tilde{N}_{T(B)}(i)|T(B)].
\end{align}
By replacing $T(B)$ with $\frac{B}{c_{\min}}$ in (\ref{result2}) and (\ref{Cofsuboptimal2}), substituting the result in (\ref{eq:inregret_second}), and combining (\ref{eq:inregret_first}) and (\ref{eq:inregret_second}) with (\ref{ProblemSolved}), we conclude the proof.

\bibliographystyle{IEEEbib}
\bibliography{Main}
\end{document}